\newcolumntype{L}{>{\RaggedRight\hangafter=1\hangindent=0em}X}
\crefname{section}{§}{§§}
\Crefname{section}{§}{§§}
\DeclareMathAlphabet{\mathcalligra}{T1}{calligra}{m}{n}
\theoremstyle{plain}
\newtheorem{theorem}{Theorem}[section]
\newtheorem{proposition}[theorem]{Proposition}
\theoremstyle{definition}
\theoremstyle{remark}
\renewcommand{\paragraph}[1]{\vspace{1mm}\noindent\textbf{#1}}
\newtcolorbox{promptbox}[2][]{promptbox, title=#2,#1}
\newtcolorbox{takeawaybox}[2][]{takeawaybox, title=#2,#1}
\newtcolorbox{observationbox}[2][]{observationbox, title=#2,#1}
\newcommand{\name}[0]{\textsc{RaML}\xspace}
\newcommand\blfootnote[1]{%
  \begingroup
  \renewcommand\thefootnote{}\footnote{#1}%
  \addtocounter{footnote}{-1}%
  \endgroup
}
\title{Deciphering Trajectory-Aided LLM Reasoning: An Optimization Perspective}
\author[$\aleph$]{Junnan Liu}
\author[$\aleph$]{Hongwei Liu}
\author[$\aleph$]{Linchen Xiao}
\author[$\aleph$]{Shudong Liu}
\author[$\aleph$]{Taolin Zhang}
\author[$\aleph$]{Zihan Ma}
\author[$\aleph$,$\dagger$]{\\Songyang Zhang}
\author[$\aleph$,$\dagger$]{Kai Chen}
\affil[$\aleph$]{Shanghai AI Laboratory}
\begin{abstract}
We propose a novel framework for comprehending the reasoning capabilities of large language models (LLMs) through the perspective of meta-learning. 
By conceptualizing reasoning trajectories as pseudo-gradient descent updates to the LLM's parameters, we identify parallels between LLM reasoning and various meta-learning paradigms. 
We formalize the training process for reasoning tasks as a meta-learning setup, with each question treated as an individual task, and reasoning trajectories serving as the inner loop optimization for adapting model parameters. 
Once trained on a diverse set of questions, the LLM develops fundamental reasoning capabilities that can generalize to previously unseen questions. 
Extensive empirical evaluations substantiate the strong connection between LLM reasoning and meta-learning, exploring several issues of significant interest from a meta-learning standpoint. 
Our work not only enhances the understanding of LLM reasoning but also provides practical insights for improving these models through established meta-learning techniques.
\end{abstract}
\begin{document}

\blfootnote{$\dagger$ Corresponding authors: Songyang Zhang (zhangsongyang@pjlab.org.cn), Kai Chen (chenkai@pjlab.org.cn)}
\blfootnote{$*$ Code is at \url{https://github.com/open-compass/RaML}}

\maketitle

\section{Introduction}

Recent advancements in large language models~(LLMs) \citep{abs-2407-21783,abs-2412-15115,abs-2303-08774,abs-2412-19437} have significantly improved their capacity to perform complex reasoning tasks. 
Current LLMs often utilize chain-of-thought~(CoT) reasoning, i.e., intermediate reasoning trajectories \citep{Wei0SBIXCLZ22,abs-2503-09567}, to facilitate systematic problem-solving through coherent, step-by-step logical deductions. 
Among them, state-of-the-art LLMs, such as OpenAI-o1 \citep{openaiO1}, DeepSeek-R1 \citep{abs-2501-12948}, Kimi-k1.5 \citep{abs-2501-12599}, QwQ \citep{qwq}, and Gemini-2.5-Pro \citep{gemini25}, exhibit exceptional proficiency in addressing intricate mathematical and programming challenges.
These models employ long reasoning trajectories, characterized by an iterative and detailed process of exploration and reflection, to enhance test-time scaling capabilities \citep{abs-2501-10069,abs-2502-12018,shah2025rethinking}. 
This iterative approach has driven significant progress in complex reasoning while motivating the studies to illuminate the potential of supervised fine-tuning (SFT) and reinforcement learning (RL) methods to refine the learning and application of extended reasoning processes \citep{abs-2410-18982,abs-2412-09413}.

Despite significant advancements, \textbf{\textit{comprehending and interpreting how LLMs achieve prominent reasoning capabilities through reasoning trajectories}} remains crucial for further enhancement and generalization~\citep{JiangXAN20,FengZGY0W23}.
The opaque nature of LLMs' internal mechanisms hinders efforts to comprehend their operations \citep{abs-2503-08200}.
Recent studies \citep{MerrillSS22,0001CP23,GiannouRS0LP23,LiuAGKZ23} have explored the representational power of reasoning trajectories, showing that LLMs equipped with these trajectories can solve complex problems. 
Other research \citep{GatmirySRJK24,abs-2502-21212} demonstrates that reasoning trajectories can effectively describe complex learning algorithms.
Nevertheless, there is a notable gap in research exploring the fundamental role of reasoning trajectories in LLM reasoning and connecting diverse training approaches to enhance these capabilities.
To address this, we propose \textbf{\name} (\underline{R}easoning \underline{a}s \underline{M}eta-\underline{L}earning), a methodology that analyzes LLM reasoning through a meta-learning perspective \citep{Schmidhuber09,AndrychowiczDCH16,RaviL17,FinnAL17,HospedalesAMS22}. 
We conceptualize reasoning trajectories as pseudo-gradient descent updates to model parameters, leveraging established meta-learning methodologies, such as Model-Agnostic Meta-Learning (MAML) \citep{FinnAL17} and Learn to Optimize (L2O) \citep{AndrychowiczDCH16}, to enhance both the understanding and optimization of LLM reasoning.

To be more specific, \name frames the training regimen for reasoning tasks as a meta-learning framework, wherein each question constitutes a distinct task, reasoning trajectories serve as inner-loop optimization for parameter adaptation, and answers act as the query set to optimize LLMs. 
In the context of \name, the training process optimizes the LLM to develop generalized reasoning abilities, identifying an effective \textit{meta-initialization} that enables efficient parameter adaptation through reasoning trajectories to produce accurate responses. 
This approach provides a theoretical foundation for analyzing LLM reasoning capabilities and training, while facilitating the application of meta-learning insights to advance LLM reasoning research.

\name is complemented by comprehensive experiments and analysis involving both models trained from scratch and publicly available models. 
Combining the studies in meta-learning~\citep{LiuWSFZH20,TriantafillouZD20,TriantafillouLZ21,AgarwalYS21,LeeMRS19,CollinsMS22}, we conduct experiments to investigate key factors influencing LLM reasoning by instantiating them from a meta-learning lens. 
Specifically, we explore the effectiveness of two typical LLM reasoning training techniques, SFT and RL, from the perspective of more stable and effective inner loop optimization.
Then, we reveal a parallel between reasoning trajectory count and support set size.
Furthermore, we examine the relationship between inner loop optimization step and reasoning trajectory token, highlighting the varying contributions and roles of tokens to the optimization.
Lastly, we verify the similarities in task generalization between LLM reasoning and meta-learning.
Finally, based on our analyses, we suggest potential pathways for improving LLM reasoning capabilities and conduct preliminary confirmative experiments.
Our contributions are summarized as follows:
\begin{itemize}[leftmargin=5mm]
    \item To elucidate the reasoning processes of LLMs, we introduce \name, an interpretation methodology for LLM reasoning from a meta-learning perspective, supported by a comprehensive theoretical analysis.

    \item We provide empirical evidence and detailed analysis, demonstrating a strong correspondence between LLM reasoning and meta-learning principles.

    \item We contextualize recent advances in LLM reasoning within our framework, offering comprehensions into their success.

    \item We present significant insights to enhance LLM reasoning, building on the existing meta-learning research and analysis addressed in this paper.
\end{itemize}

\section{\name: Interpreting LLM Reasoning as Meta-Learning}

In this section, we elucidate the interpretation methodology for the large language model (LLM) reasoning from a meta-learning~\citep{Schmidhuber09,AndrychowiczDCH16,RaviL17,FinnAL17,HospedalesAMS22} perspective, i.e., \name. 
First, we conceptualize the reasoning trajectories as a \textit{pseudo gradient update} to the parameters of the LLM~(\Cref{sec:reasoning-trajectories-as-parameters-update}) and subsequently develop a meta-learning framework to model the training process for the reasoning task~(\Cref{sec:meta-Learning-perspective-on-llm-reasoning}). 
Lastly, we establish connections between various training techniques and our proposed definition~(\Cref{sec:connection-to-different-training-techniques}).
The notations used in this section are listed in \Cref{app:notations}.

\begin{figure}[t]
    \centering
    \includegraphics[width=.9\linewidth]{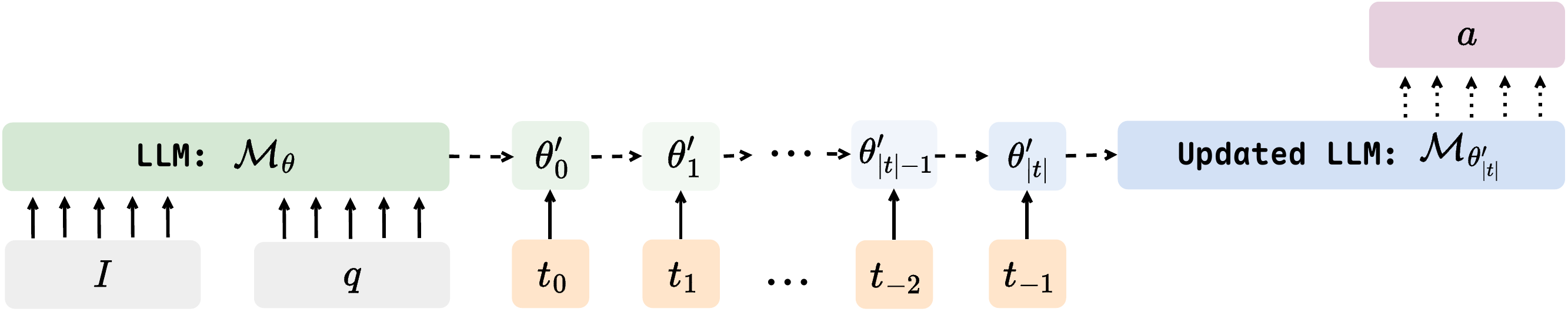}
    \caption{Illustration of the reasoning trajectory~($t$) as the optimization of the LLM parameters $\theta$.}
    \label{fig:reason_trajectories_as_optimization}
\end{figure}

\subsection{Setup}

In this paper, we represent the large language model (LLM) as $\mathcal{M}_\theta$, where $\theta$ signifies the parameters of the LLM.
We focus on a specific reasoning task that involves a set of questions denoted as $\mathcal{Q} = \{q_i\}_{i\in[1,n]}$ and its corresponding answers $\mathcal{A} = \{a_i\}_{i\in[1,n]}$. 
Typically, the LLM is prompted to generate the answer $a$ based on the instruction $I$ and the question $q_i$: 
\begin{equation}
    \mathrm{d}\left( \mathcal{M}_\theta(I;q_i) \right) \rightarrow a_i,
\end{equation}
where $\mathrm{d}$ denotes the autoregressive decoding mechanism~\citep{VaswaniSPUJGKP17,radford2018improving}, which is specifically defined as follows:
\begin{equation} \label{eq:decoding-distribution}
    p_\theta(a_i) = \prod_{0 \le j < \vert a_i \vert} \mathrm{Softmax}\left( \bm{E}_O^T \cdot \mathcal{M}_\theta \left( I,q_i, t, a_i^0, ..., a_i^{j - 1} \right) \right) \left[{a_i^j}\right].
\end{equation}
Here, $\{a_i^0, \ldots, a_i^{\vert a_i \vert}\}$ denotes the token set of the answer $a$, $t$ represents the possible intermediate reasoning trajectories, and $\bm{E}_O$ indicates the output embeddings of the entire token set (i.e., vocabulary). 
Intuitively, $\mathcal{M}_\theta \left( I, q_i, t, a_i^0, \ldots, a_i^{j - 1} \right)$ represents the \textit{activation} determined by the parameters $\theta$ and the inputs, while the predicted probability is computed through the inner product between the output embeddings and the activation. 
The activation, representing the output at the final position, is computed iteratively through the self-attention and feed-forward layers of the LLM.
During training, the LLM $\mathcal{M}$ is optimized to deliver accurate answers to each question: 
\begin{equation} \label{eq:objective}
    \mathcal{O} = \max \sum_{(q_i,a_i) \in \mathcal{Q} \times \mathcal{A}} s(a^\prime_i, a_i),
\end{equation}
where $a^\prime_i$ means the predicted answer for $q_i$, and $s(a^\prime, a)$ indicates the plausibility of $a^\prime$ w.r.t. $a$, which also defines a \textit{loss landscape}.

\subsection{Reasoning Trajectories as Parameter Update} \label{sec:reasoning-trajectories-as-parameters-update}

Recent works~\citep{BrownMRSKDNSSAA20,Wei0SBIXCLZ22,openaiO1,abs-2501-12948} demonstrate that incorporating intermediate reasoning steps can significantly enhance the capacity of large language models to execute complex reasoning tasks.
Moreover, some studies~\citep{dai2023can,BaiCWXM23,GiannouRS0LP23,GatmirySRJK24,FuCJS24,abs-2502-21212} theoretically demonstrate that models based on the transformer architecture can learn to perform iterative algorithms like multi-step GD with the enhancement of CoT~(which we called reasoning trajectories in this paper).
However, these studies primarily focus on explicit numerical optimization problems, such as linear regression, and demonstrate that LLMs can learn optimization algorithms like multi-step GD in the reasoning trajectories to solve the problem. 
In contrast, we conceptualize \textit{the reasoning trajectories of an LLM $\mathcal{M}$ as a multi-step gradient descent process of the model's parameters $\theta$}, which could be formally represented by: 
\begin{equation} \label{eq:multi-step-update}
    \theta^\prime_i \leftarrow \theta^\prime_{i-1} + \Delta \mathcal{M}_{\theta^\prime_{i-1}}(I,q,t_{\le i}), \quad \theta_0^\prime=\theta, \quad 1 \le i \le \vert t \vert,
\end{equation}
where $t$ denotes a reasoning trajectory, $\Delta \mathcal{M}_{\theta^\prime_{i-1}}(I,q,t_{\le i}) = -\eta \nabla_{\theta^\prime_{i-1}} \mathcal{L}_q(\theta^\prime_{i-1}) $ represents the \textit{pseudo gradient update} associated with the reasoning trajectory $t_{\le i}$, and $\theta^\prime_{\vert t \vert}$ signifies the updated parameters of the LLM in response to the instruction $I$, the query $q$, and the reasoning trajectory $t$.

In summary, we conceptualize each question $q_i$ as a sophisticated optimization task, with the LLM $\mathcal{M}$ being optimized to produce the corresponding answer $a_i$. 
Prior to generating the final answer, the LLM is guided by an intermediate reasoning trajectory, which serves as a parameter update mechanism.
The overall process is illustrated in \Cref{fig:reason_trajectories_as_optimization}.

\paragraph{Pseudo Gradient Update. }
Without loss of generality, we consider a classic transformer model~\citep{VaswaniSPUJGKP17} comprising a single self-attention layer and a two-layer feed-forward network while disregarding normalization layers and other components.
When using $ l = \{I, q\} $ as input, its activation can be expressed as follows:
\begin{equation}
    \bm{W}_2^T \left( \sigma \left( \bm{W}_1^T \left( \mathrm{Softmax} \left( \bm{E}_{l,-1} \bm{W}_q \bm{W}_k^T \bm{E}_{l,:}^T \right)  \bm{E}_{l,:} \bm{W}_v \right) + b_1\right) \right) + b_2,
\end{equation}
where $\bm{E}_{l,:}$ indicates the input embeddings of the whole sequence $l$ and $\bm{E}_{l,-1}$ indicates the input embeddings of the last position of $l$.
In this context, the parameters $\theta$ refers to $\left\{\bm{W}_q, \bm{W}_k, \bm{W}_v, \bm{W}_1, \bm{W}_2, b_1, b_2 \right\}$.
Then, given a reasoning trajectory $t$, when attending to the first token $t^0$ of $t$ activation is changed to:
\begin{equation} \label{eq:activation-attend-first-traj-token}
    \bm{W}_2^T \left( \sigma \left( \bm{W}_1^T \left( \mathrm{Softmax} \left( \bm{E}_{t,0} \bm{W}_q \bm{W}_k^T  
        \left [ \begin{matrix}
            \bm{E}_{l,:} \\
            \bm{E}_{t,0} \\
        \end{matrix} \right ]^T
        \right) 
        \left [ \begin{matrix}
            \bm{E}_{l,:} \\
            \bm{E}_{t,0} \\
        \end{matrix} \right ]
        \bm{W}_v \right) + b_1\right) \right) + b_2.
\end{equation}
\begin{proposition}[One-Step Pseudo Gradient Update]\label{pro:one-step-pseudo-update}
    There exists a set of parameters, denoted as $\theta_t^\prime$, which includes $\left\{\bm{W}_q^\prime, \bm{W}_k^\prime, \bm{W}_v^\prime, \bm{W}_1^\prime, \bm{W}_2^\prime, b_1^\prime, b_2^\prime \right\}$, allowing \Cref{eq:activation-attend-first-traj-token} to be expressed in the following form:
    \begin{equation}
        \bm{W}_2^{\prime T} \left( \sigma \left( \bm{W}_1^{\prime T} \left( \mathrm{Softmax} \left( \bm{E}_{l,-1:} \bm{W}_q^\prime \bm{W}_k^{\prime T} \bm{E}_{l,:}^T \right)  \bm{E}_{l,:} \bm{W}_v^\prime \right) + b_1^\prime \right) \right) + b_2^\prime,
    \end{equation}
     where $\theta_t^\prime$ represents the one-step update of $\theta$ and the increment $\Delta \mathcal{M}_{\theta}(I,q,t^0)$ is only associated with $\theta$, $I$, $q$, and $t^0$.
\end{proposition}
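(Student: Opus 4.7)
The plan is to construct the primed parameters $\theta_t^\prime$ explicitly by absorbing the contribution of the single extra attended token $t^0$ into the existing weight matrices, so that the expanded activation in \Cref{eq:activation-attend-first-traj-token} can be re-expressed over the original sequence $l = \{I,q\}$ alone. Following the spirit of the attention-as-gradient-descent construction of \citep{dai2023can}, I would first split the softmax over the augmented key set $\{\bm{E}_{l,:},\bm{E}_{t,0}\}$: letting $\alpha \in (0,1)$ denote the softmax mass assigned to the new key $\bm{E}_{t,0}$, the attention output becomes $(1-\alpha)\,\mathrm{Softmax}(\bm{E}_{l,-1}\bm{W}_q\bm{W}_k^T\bm{E}_{l,:}^T)\bm{E}_{l,:}\bm{W}_v + \alpha\,\bm{E}_{t,0}\bm{W}_v$, which isolates a clean additive correction over the original attention output.

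The next step is to absorb the corrective term $\alpha\bigl(\bm{E}_{t,0}\bm{W}_v - \mathrm{Softmax}(\cdots)\bm{E}_{l,:}\bm{W}_v\bigr)$ into an effective value projection $\bm{W}_v^\prime = \bm{W}_v + \Delta\bm{W}_v$, exploiting the fact that $\mathrm{Softmax}(\cdots)\bm{E}_{l,:}$ is a fixed row vector in the attended key set, so the correction admits a rank-one outer-product representation. The query and key projections can be retained as $\bm{W}_q^\prime = \bm{W}_q$ and $\bm{W}_k^\prime = \bm{W}_k$ (or minimally rescaled to account for the $(1-\alpha)$ factor through an additive shift in logits), and the FFN parameters $\bm{W}_1^\prime,\bm{W}_2^\prime,b_1^\prime,b_2^\prime$ are adjusted only to compensate residuals arising from passing the modified attention output through $\sigma$ and $\bm{W}_2^T$. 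With $\Delta\mathcal{M}_\theta(I,q,t^0) := \theta_t^\prime - \theta$ defined this way, the update depends only on $\theta$, $I$, $q$, and $t^0$ as required, since both $\alpha$ and $\Delta\bm{W}_v$ are functions of these inputs and $\theta$ alone through $\bm{E}_{t,0}$ and the attention score $\bm{E}_{l,-1}\bm{W}_q\bm{W}_k^T\bm{E}_{t,0}^T$.

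The main obstacle is the nonlinearity of the FFN activation $\sigma$: the attention-level decomposition above is exact, but pushing the rank-one correction through $\sigma$ and then through $\bm{W}_2^T$ does not in general admit an exact absorption into a finite-dimensional weight adjustment unless $\sigma$ is the identity. The hardest step is therefore to handle $\sigma$; I would address it either by specializing to the linearized attention/activation setting used in \citep{dai2023can,GatmirySRJK24}, where the decomposition is closed-form and the analogy with one-step gradient descent becomes exact, or by allowing $\theta_t^\prime$ to depend on the inputs in the controlled way already permitted by the proposition statement, so that the existence claim holds pointwise rather than via a single globally valid closed-form weight update. Finally, interpreting the resulting additive increment as $-\eta\nabla_\theta\mathcal{L}_q(\theta)$ for an implicit loss $\mathcal{L}_q$ reduces to choosing any surrogate whose gradient matches the constructed $\Delta\bm{W}_v$, which the reasoning-as-optimization framing of \Cref{eq:multi-step-update} already accommodates.
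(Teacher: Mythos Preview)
Your decomposition of the extended attention contains a silent but fatal slip: in \Cref{eq:activation-attend-first-traj-token} the query is $\bm{E}_{t,0}$, not $\bm{E}_{l,-1}$. The correct softmax split is
\[
(1-\alpha)\,\mathrm{Softmax}\!\bigl(\bm{E}_{t,0}\bm{W}_q\bm{W}_k^{T}\bm{E}_{l,:}^{T}\bigr)\bm{E}_{l,:}\bm{W}_v \;+\; \alpha\,\bm{E}_{t,0}\bm{W}_v,
\]
so even after you absorb the additive term $\alpha\bm{E}_{t,0}\bm{W}_v$ into a rank-one $\Delta\bm{W}_v$, the residual attention still has the \emph{wrong query}. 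Since you keep $\bm{W}_q^\prime=\bm{W}_q$, the target form $\mathrm{Softmax}(\bm{E}_{l,-1}\bm{W}_q^\prime\bm{W}_k^{\prime T}\bm{E}_{l,:}^T)$ produces attention weights computed from $\bm{E}_{l,-1}$, which in general differ from those computed from $\bm{E}_{t,0}$. Nothing in your plan converts one query vector into the other, so the two expressions will not match. The paper addresses exactly this point first: it exhibits a linear map $\bm{P}$ taking one embedding to the other and folds it into $\bm{W}_q^\prime$, only afterward dealing with the enlarged key/value set via a pseudoinverse construction on $\bm{E}_{l,:}$.

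Your identification of the FFN activation $\sigma$ as ``the main obstacle'' is also misdirected. Once the attention outputs are made equal, one simply keeps $\bm{W}_1^\prime=\bm{W}_1$, $\bm{W}_2^\prime=\bm{W}_2$, $b_1^\prime=b_1$, $b_2^\prime=b_2$, and the nonlinearity never enters the argument; this is precisely what the paper does. The difficulty you anticipate arises only because you attempt to push a correction \emph{through} the FFN rather than matching before it. Fix the query issue (e.g.\ by choosing $\bm{W}_q^\prime$ so that $\bm{E}_{l,-1}\bm{W}_q^\prime=\bm{E}_{t,0}\bm{W}_q$, which is always solvable as a linear system in $\bm{W}_q^\prime$), match at the attention output, and the FFN becomes a non-issue.
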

According to \Cref{pro:one-step-pseudo-update}~(the proof can be found in \Cref{sec:proof_one-step-pseudo-update}), as the model progressively attends to the entire reasoning trajectory, the model parameters $\theta$ are updated incrementally, a process referred to as the \textit{pseudo gradient update}.

\paragraph{Empirical Examples. }
We provide empirical examples of the pseudo-gradient update using QwQ-32B~\citep{qwq} to perform reasoning on AIME24, in which the model's confidence in the answer functions serves as a probe. 
Specifically, we calculate the negative log-probability of the answer at each position~(denoted as $\mathcal{\widehat{L}}$) within the generated trajectories by appending \texttt{Final Answer$\backslash$n$\backslash$boxed\{..answer..\}} at each position. 
This method provides an alternative approach to observing the overall optimization objective~(\Cref{eq:objective}), with models becoming more optimal as $\mathcal{\widehat{L}}$ decreases.
\Cref{fig:loss_landscape} displays the corresponding landscape of the negative log-probability. 
As shown in \Cref{fig:qwq_gradient_decent}, the negative log-probability progressively decreases along the reasoning trajectories which aligns with our definition.
Additional visualizations are provided in \Cref{app:qwen3_update}.

\begin{figure}[tb]
    \centering
    \includegraphics[width=.9\linewidth]{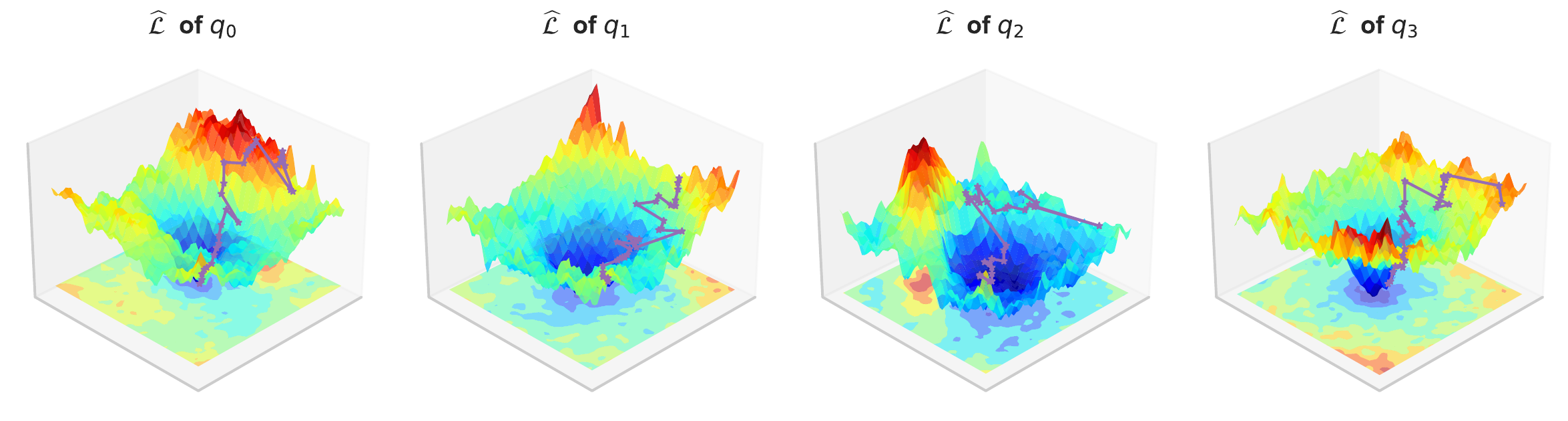}
    \caption{Landscape of the plausibility regarding LLMs to generate accurate answers. We apply the methodology proposed by Li et al.~\citep{Li0TSG18}. The questions $q_0, q_1, q_2, q_3$ are selected from AIME24. Additionally, we project the trajectory of the pseudo-gradient update onto the landscape~(\textcolor{Purple}{purple} line). Please refer to \Cref{sec:demonstrated-questions} for more details.}
    \label{fig:loss_landscape}
    \vspace{-0.5em}
\end{figure}
\begin{figure}[tb]
    \centering
    \includegraphics[width=.9\linewidth]{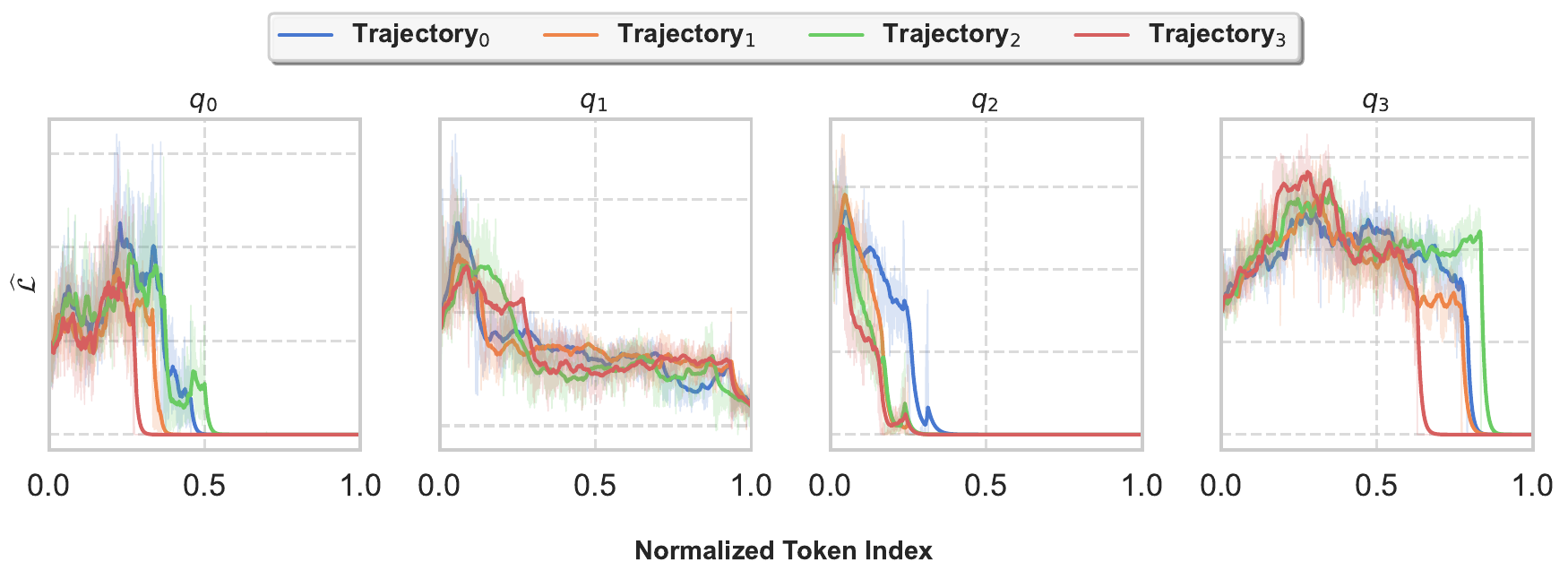}
    \caption{Visualization of the \textit{pseudo-gradient update}: The $x$-axis represents the normalized indices of corresponding trajectories. $q_0,q_1,q_2,q_3$ are question selected from AIME24, refer to \Cref{sec:demonstrated-questions} for more details.}
    \label{fig:qwq_gradient_decent}
    \vspace{-1.0em}
\end{figure}

\subsection{A Meta-Learning Perspective on LLM Reasoning} \label{sec:meta-Learning-perspective-on-llm-reasoning}

Building upon the previous discussion, we present a meta-learning perspective, named \name, for modeling the process and the training of the LLM reasoning capability.
We consider each question $q_i$ within the question set as an independent task in the meta-learning.
During training (e.g., supervised fine-tuning~\citep{RuderH18,radford2018improving,DevlinCLT19} or reinforcement learning~\citep{MnihKSGAWR13,SchulmanWDRK17,Ouyang0JAWMZASR22,abs-2501-12948}) on the question set $\{q_i\}$, the LLM $\mathcal{M}_\theta$ is prompted to solve a new question $q_i$ by following a reasoning trajectory $t$. 
Initially, the parameters $\theta$ are updated to $\theta_t^\prime$ using one or more \textit{pseudo gradient descent update} indicated by reasoning trajectory $t$. 
Subsequently, the LLM is optimized by \textit{pseudo second order gradient}, formally expressed as follows:
\begin{equation}
    \min_\theta \sum_{q_i \in \mathcal{Q}} \sum_{t \in \mathcal{T}_i} \mathcal{L}_{q_i} \left( \mathcal{M}_{\theta^\prime_t} \right) = \min_\theta \sum_{q_i \in \mathcal{Q}} \sum_{t \in \mathcal{T}_i} \mathcal{L}_{q_i} \left( \mathcal{M}_{\theta + \Delta \mathcal{M}_{\theta}(I,q,t)} \right),
\end{equation}
where $\mathcal{T}_i$ denotes the set of reasoning trajectories corresponding to the question $q_i$ and $\theta + \Delta \mathcal{M}_{\theta}(I,q,t)$ indicates the multi-step update of $\theta$ as detailed in \Cref{eq:multi-step-update}.
\begin{figure}
    \centering
    \begin{minipage}[t]{.45\linewidth}
    \centering
    \vspace{-1.0em}
    \IncMargin{1em}
    \begin{algorithm}[H]
        \KwIn{$p\left( \mathcal{T} \right)$: distribution over tasks, $\alpha,\beta$: step size hyperparameters.}
        \BlankLine
        Randomly initialize $\theta$ \;
        \While{not done}{
            Sample batch of tasks $\mathcal{T}_i \sim p\left( \mathcal{T} \right)$ \;
            \For{\textbf{all} $\mathcal{T}_i$}{
                Evaluate $\nabla \mathcal{L}_{\mathcal{T}_i} \left( f_\theta \right)$ with respect to $K$ examples \;
                Compute adapted parameters with gradient descent: $\theta^\prime = \theta - \alpha \nabla_\theta \mathcal{L}_{\mathcal{T}_i} \left( f_\theta \right)$ \;
            }
        }
        \BlankLine
        Update $\theta \leftarrow \theta - \beta \nabla \sum_{\mathcal{T}_i \sim p \left( \mathcal{T} \right)} \mathcal{L}_{\mathcal{T}_i} \left( f_{\theta^\prime} \right)$ \;
        \caption{Model-Agnostic Meta-Learning} \label{algo:maml}
    \end{algorithm}
    \end{minipage}
    \begin{minipage}[t]{.54\linewidth}
    \centering
    \vspace{-1.0em}
    \IncMargin{1em}
    \begin{algorithm}[H]
        \KwIn{$\mathcal{M}_\theta$: LLM, $I$: instruction, $\mathcal{Q}$: question set, $\mathcal{T}_i$: reasoning trajectories for each question $i$.}
        \While{not training done}{
            Sample batch of questions $q_i$ from $\mathcal{Q}$ \;
            \For{\textbf{all} $q_i$}{
                Obtain reasoning trajectories $\mathcal{T}_i$ of each $q_i$ through training data or rollout \;
                Update $\theta$ to $\theta^\prime_{t_j}$ by reasoning trajectory $t_j \in \mathcal{T}_i$ refer to \Cref{eq:multi-step-update} \;
                Optimize $\theta$ through $\sum_{t_j \in \mathcal{T}_i} \mathcal{L}_{q_i} \left( \mathcal{M}_{\theta^\prime_{t_j}} \right)$ for each $q_i$ \;
            }
        }
        \caption{Meta-Learning Perspective on LLM Reasoning} \label{algo:meta-learning-perspective-on-llm-reasoning}
    \end{algorithm}
    \end{minipage}
\end{figure}

Intuitively, \name can be perceived as a variant of \textbf{Model-Agnostic Meta-Learning} (MAML, detailed in \Cref{algo:maml})~\citep{FinnAL17}, where the update of $\theta$ using reasoning trajectories function as the inner loop, while the final optimization of the answer decoding distribution constitutes the outer loop, as outlined in \Cref{algo:meta-learning-perspective-on-llm-reasoning}. 
In \name, the gradient update associated with the latent \textit{support set} is represented by the reasoning trajectories, whereas the answer denotes the \textit{query set}.
There are no explicit evaluations~(i.e., loss computation and backward) during the inner loop, as the gradient update is implicitly dictated by the reasoning trajectories. 
Although the model weights have not been directly updated, the pseudo update enables the LLM to simulate question-specific optimization within a specific reasoning trajectory, thereby significantly enhancing the accuracy and stability of LLM reasoning.
During the training process, the parameters of the LLM, denoted as $\theta$, are updated to $\theta^\prime_{q_i,t_j}$ according to the given reasoning trajectory $t_j$ in the inner loop. 
In the outer loop, the parameters of the LLM are optimized using the second-order gradient ($\mathcal{L}_{q_i} \rightarrow \theta^\prime_{q_i,t_j} \rightarrow \theta$).
The LLM is optimized to provide an effective and robust foundation for answering questions~(tasks), \textit{allowing its parameters to be easily adapted based on the reasoning trajectories associated with these questions}, thereby facilitating answer generation.

Additionally, in the standard MAML process, a few-shot support set is typically required to fine-tune the model on a new task. 
In the LLM reasoning scenario, this support set, comprising reasoning trajectories, is generally generated by the LLM itself. 
Thus, the inner loop's optimization process in \name resembles \textbf{Learn-to-Optimize} (L2O)~\citep{AndrychowiczDCH16,LiM17,LiM17b}, which involves learning a parameterized optimizer to automate the optimization of various tasks by being trained to decode the reasoning trajectories. 
Specifically, during the LLM reasoning training, the LLM is trained to function as the meta-optimizer, generating an inner loop optimization path tailored to the specific question.

\subsection{Instantiation of Training Techniques within \name} \label{sec:connection-to-different-training-techniques}

\begin{table}[tb]
    \centering
    \caption{Comparison of training techniques, where SFT, PO, and RL mean the abbreviation of supervised fine-tuning, preference optimization, and reinforcement learning, respectively.} \label{tab:comparison-between-training-techniques}
    \vspace{0.5em}
    \resizebox{.95\linewidth}{!}{
    \begin{tabular}{lcc}
        \toprule
        \textbf{Techniques} & \centering \textbf{Reasoning Trajectories} & \textbf{Outer Loop Loss} \\
        \midrule
        SFT & Off-policy & $\mathcal{L} = - \mathbb{E}_{(x, y) \sim \mathcal{D}} \left[ \log p_\theta(y \mid x) \right]$~\citep{radford2018improving} \\
        Off-Policy PO & Off-policy & $\mathcal{L} = - \log \sigma \left( r_\theta(y_{\text{preferred}}) - r_\theta(y_{\text{dispreferred}}) \right)$~\citep{RafailovSMMEF23} \\
        On-Policy RL & On-policy & $\mathcal{L} = - \mathbb{E}_{t} \left[ \min\left( r_t(\theta) \hat{A}_t, \, \text{clip}(r_t(\theta), 1 - \epsilon, 1 + \epsilon) \hat{A}_t \right) \right]$~\citep{SchulmanWDRK17} \\
        \bottomrule
    \end{tabular}
    }
    \vspace{-0.5em}
\end{table}

We review various training techniques from a meta-learning perspective, including supervised fine-tuning~\citep{RuderH18,DevlinCLT19}, off-policy preference optimization~\citep{RafailovSMMEF23}, and on-policy reinforcement learning~\citep{SchulmanWDRK17,Ouyang0JAWMZASR22,abs-2402-03300,AhmadianCGFKPUH24}.
We propose to categorize these techniques into two macro-level stages. 
The first stage involves acquiring reasoning trajectories and inputting them into the LLM to update its parameters $\theta$, thereby obtaining the output token distribution through updated $\theta$.
Subsequently, the LLM parameters $\theta$ are optimized using a specific loss function based on this output distribution. 
Since various loss functions lead to the same maximum likelihood estimation (MLE)~\citep{abs-2503-01067}, we attribute the essential difference between different training techniques to their inner loop optimization.
Inner loop optimization is crucial in meta-learning training, as the meta-gradient is essential for enhancing meta-learning performance.
As illustrated in \Cref{tab:comparison-between-training-techniques}, off-policy training techniques obtain reasoning trajectories through manual collection or synthesis, while on-policy training techniques generate reasoning trajectories based on the model distribution.
From the perspective of learning to optimize, off-policy training techniques are equivalent to learning from an \textit{optimal meta-optimizer}, directing the optimization of the inner loop. 
In contrast, RL requires independently exploring the inner loop's optimization path, presenting challenges due to increased freedom but allowing for potentially greater optimization outcomes.

\section{Empirical Analysis on LLM Reasoning from Meta-Learning Perspective}

Building on a meta-learning perspective of LLM reasoning, this section explores key factors that influence LLM reasoning.
Specifically, we study and analyze key issues of interest in the research community regarding LLM reasoning by instantiating them within the framework of meta-learning, referencing relevant research findings in this domain.
We focus on the following problems: 
\begin{enumerate}[leftmargin=5mm]
    \item[\ding{182}] Which training strategy, SFT or RL, is more effective for LLM reasoning, and why (\Cref{sec:source-of-rt})?
    \item[\ding{183}] Why do longer reasoning trajectories enhance reasoning performance (\Cref{sec:length-of-rt})? 
    \item[\ding{184}] What principles behind reasoning-efficiency methodology contribute to the trade-off between cost and performance (\Cref{sec:length-of-rt})?
    \item[\ding{185}] Does trajectory-aided reasoning generalize effectively across different domains (\Cref{sec:ood-question})?
\end{enumerate}

\begin{table}[tb]
    \centering
    \caption{The evaluation performance of Qwen2.5-7B-Base trained using both SFT and Zero-GRPO training methods. In this context, "Qwen" refers to the abbreviation for Qwen2.5-Math-Instruct, and "Distil-Qwen" denotes DeepSeek-R1-Distill-Qwen-14B. \textcolor{green!50!black}{\textbf{Green}} cells indicate the best performance in each column, while \textcolor{blue!70!black}{\textbf{Blue}} cells indicate the second-best performance.}
    \label{tab:sft-vs-rl-performance}
    \resizebox{.95\linewidth}{!}{
    \begin{tabular}{llcccccc}
        \toprule
        \multirow{2}{*}{\textbf{Techniques}} & \multirow{2}{*}{\textbf{Source}} & \multicolumn{2}{c}{\textbf{AIME24}} & \multicolumn{2}{c}{\textbf{MATH500-L5}} & \multicolumn{2}{c}{\textbf{LiveMathBench-Hard}} \\
        & & \textbf{Pass@}$8$~$\uparrow$ & \textbf{mG-Pass@}$8$~$\uparrow$ & \textbf{Pass@}$8$~$\uparrow$ & \textbf{mG-Pass@}$8$~$\uparrow$ & \textbf{Pass@}$8$~$\uparrow$ & \textbf{mG-Pass@}$8$~$\uparrow$ \\
        \midrule
        \multirow{2}{*}{SFT} & Qwen & $20.34$ & \cellcolor{blue!10}$7.43$ & $58.42$ & \cellcolor{blue!10}$35.65$ & \cellcolor{blue!10}$26.77$ & $7.43$  \\
         & Distill-Qwen & \cellcolor{green!20}$36.69$ & \cellcolor{green!20}$10.29$ & \cellcolor{green!20}$82.98$ & \cellcolor{green!20}$45.79$ & $25.15$ & \cellcolor{green!20}$10.46$  \\
        Zero-GRPO & - & \cellcolor{blue!10}$27.37$ & $4.08$ & \cellcolor{blue!10}$71.66$ & $30.48$ & \cellcolor{green!20}$27.48$ & \cellcolor{blue!10}$8.21$  \\
        \bottomrule
    \end{tabular}
    }
    \vspace{-0.2em}
\end{table}

\subsection{Experiment Setup}
\paragraph{Reasoning Task. } In this paper, we mainly focus on the mathematical reasoning task due to its broad applicability and prominence in the research community and we also include other reasoning tasks in \Cref{sec:ood-question} for the study of generalization.

\paragraph{Training. } \label{sec:experiment-setup-training}
To minimize the impact of the post-training, we train Qwen2.5-7B-Base~\citep{abs-2412-15115} from scratch and conduct experiments on it. 
We involve SFT for the off-policy training and (\textbf{Zero}-)GRPO~\citep{abs-2402-03300} for the on-policy training. 
The training data, sourced from Open Reasoner Zero \citep{hu2025open}, initially comprised approximately $57$k questions, refined to $39$k through filtering~(see \Cref{app:training-details} for details). 
Synthetic reasoning trajectories are generated using Qwen2.5-Math-72B-Instruct \citep{abs-2409-12122} and DeepSeek-R1-Distill-Qwen-14B \citep{abs-2501-12948}. 
Further training details are provided in \Cref{app:training-details}.

\paragraph{Evaluation. } 
We primarily evaluate performance using three mathematical reasoning benchmarks orthogonal to the training data: AIME24 \footnote{\url{https://huggingface.co/datasets/AI-MO/aimo-validation-aime}}, MATH500~\citep{LightmanKBEBLLS24}~(Level 5 questions selected for greater discrimination), and LiveMathBench-Hard~\citep{abs-2412-13147}. 
We also include GPQA~\citep{abs-2311-12022} and LiveCodeBench~\citep{abs-2403-07974} to assess generalization. 
Model outputs are generated with a temperature of $1.0$, top-$p$ of $0.8$, top-$k$ of $50$, and a maximum output length of $16,384$ tokens. We report mG-Pass@$k$~\citep{abs-2412-13147} for stability and Pass@$k$~\citep{abs-2107-03374} for performance. 
Additional evaluation details are in~\Cref{app:evaluation-details}.

\begin{figure}[tb]
    \centering
    \includegraphics[scale=.4]{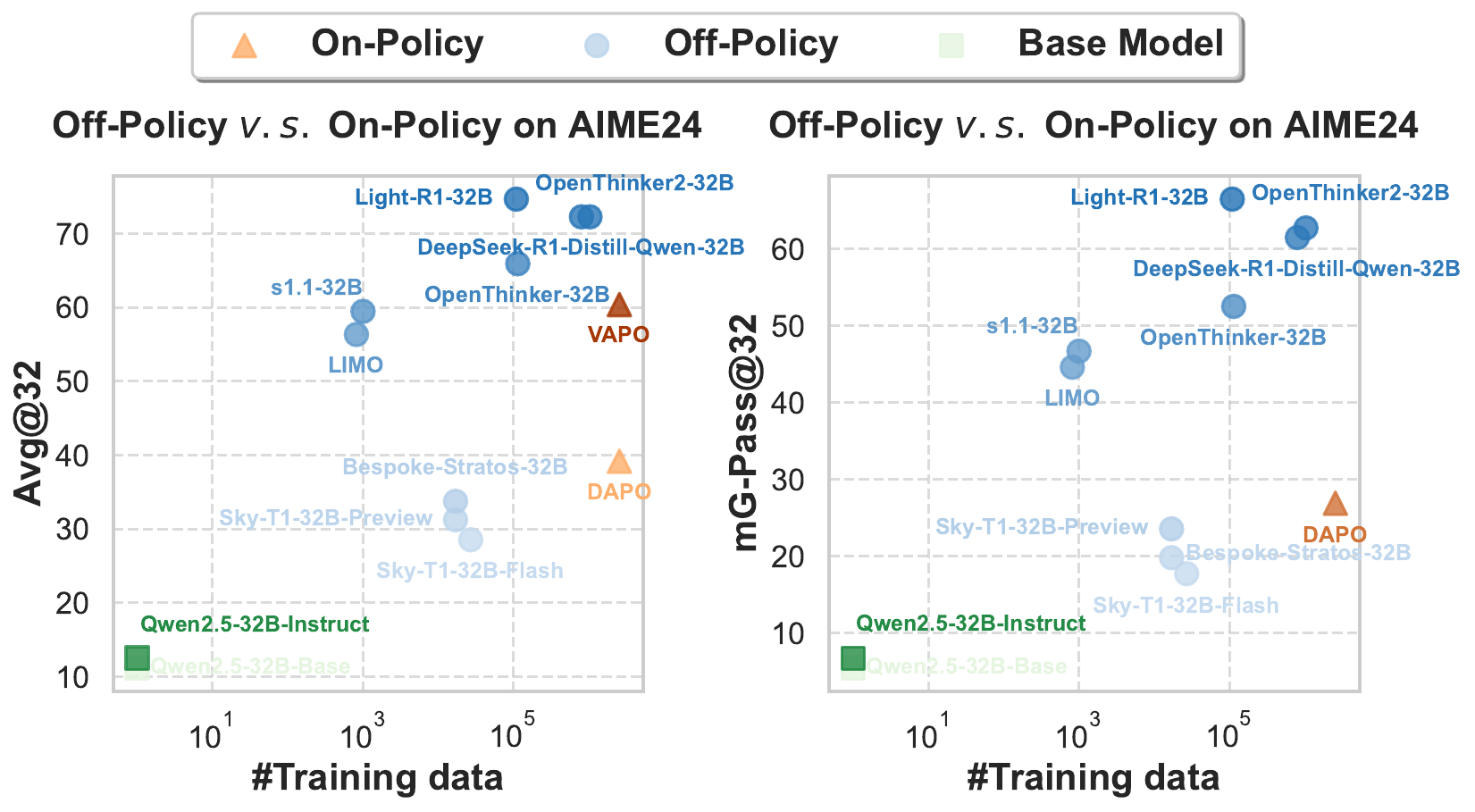}
    \caption{Performance of base models, models trained on off-policy data, and models trained on on-policy data using the AIME24 dataset, with the $x$-axis representing the amount of training data. We generate $64$ for each question and report Pass@$32$ and mG-Pass@$32$. The evaluation includes prominent models such as Sky-T1-32B~\citep{sky_t1_2025}, Bespoke-Stratos-32B~\citep{bespoke_stratos}, LIMO~\citep{abs-2502-03387}, s1.1-32B~\citep{abs-2501-19393}, OpenThinker-32B~\citep{openthoughts}, Light-R1-32B~\citep{abs-2503-10460}, DeepSeek-R1-Distill-Qwen-32B~\citep{abs-2501-12948}, DAPO-32B~\citep{abs-2503-14476}, and VAPO-32B~\citep{abs-2504-05118}. These models are based on either Qwen2.5-32B or Qwen2.5-32B-Instruct only through SFT or RL~(\textbf{Zero-RL}). Since VAPO is not open source, we copy its results from the original paper.} \label{fig:sft_vs_rl}
    \vspace{-0.2em}
\end{figure}

\subsection{Inner Loop Optimization \textit{v.s.} Reasoning Trajectory Source} \label{sec:source-of-rt}
The inner loop optimization is crucial in meta-learning, as the results of this process significantly impact the stability and performance of the final model. 
In the definition of \name, the LLM needs to learn as the inner loop optimizer, generating an optimization trajectory for each query. 
It is well-documented that training learned optimizers presents considerable challenges \citep{LanMYX24}. 
In this section, we will discuss the development of inner loop optimizers for SFT and GRPO training techniques.
The biggest difference between them is the source of the reasoning trajectories used for training: 1) on-policy, where the trajectories are generated by the LLM currently being updated, and 2) off-policy, where the trajectories are generated either by other LLMs or by a previously trained version of the same LLM~(e.g., through reject sampling~\citep{abs-2308-01825,SinghCAAPGLH0XP24}).

\paragraph{Status Quo of SFT v.s. RL. } 
Recent studies \citep{0015DYW0J0024,chen2025sft,abs-2501-17161} claim the superiority of the on-policy strategy in LLM reasoning training.
For example, GRPO-based DeepSeek-R1-Zero \citep{abs-2501-12948} outperforms DeepSeek-V3, which is trained on large-scale off-policy synthetic data, in mathematical reasoning tasks, scoring $71.0$ compared to $39.8$ on AIME24, thereby reinforcing the advantages of on-policy strategies.
However, as our results in \Cref{tab:sft-vs-rl-performance} and the evaluation results of community models in \Cref{fig:sft_vs_rl}, GRPO-trained models \textbf{do not} consistently outperform SFT-trained models for the same base LLM, consistent with findings in \citep{abs-2501-12948,abs-2503-10460,qwen3}. 

\paragraph{SFT Leads to Stable Inner Loop Optimization. } 
Learning to optimize frequently encounters challenges such as unstable training, easy divergence, and limited generalization. 
To address these issues, researchers~\citep{Premont-Schwarz22,abs-2406-00153} have suggested employing optimal optimizers as ``guardian'' optimizers, integrating their features to ensure convergence and stability. 
The training reasoning trajectories used by SFT originate from human-annotated or other advanced reasoning models.
These trajectories can be viewed as guides from an \textit{oracle optimizer}. 
Consequently, SFT achieves a stable and effective inner loop optimization process, leading to superior performance. 
However, this does not imply that reinforcement learning always has disadvantages. RL provides greater freedom to explore optimization paths and, given sufficient model capability and exploration steps, offers a higher theoretical upper limit.

\begin{table}[tb]
    \centering
    \caption{Performance of Zero-GRPO model and GRPO model based on the SFT cold start.}
    \label{tab:rl-after-sft-performance}
    \resizebox{.95\linewidth}{!}{
    \begin{tabular}{lcccccc}
        \toprule
        \multirow{2}{*}{\textbf{Techniques}} & \multicolumn{2}{c}{\textbf{AIME24}} & \multicolumn{2}{c}{\textbf{MATH500-L5}} & \multicolumn{2}{c}{\textbf{LiveMathBench-Hard}} \\
        & \textbf{Pass@}$8$~$\uparrow$ & \textbf{mG-Pass@}$8$~$\uparrow$ & \textbf{Pass@}$8$~$\uparrow$ & \textbf{mG-Pass@}$8$~$\uparrow$ & \textbf{Pass@}$8$~$\uparrow$ & \textbf{mG-Pass@}$8$~$\uparrow$ \\
        \midrule
        Zero-GRPO & $27.37$ & $4.08$ & $71.66$ & $30.48$ & $27.48$ & $8.21$  \\
        \;\;\;\; + SFT Cold Start & $35.87$\textcolor{green!50!black}{$_{\uparrow 31\%}$} & $11.23$\textcolor{green!50!black}{$_{\uparrow 175\%}$} & $82.42$\textcolor{green!50!black}{$_{\uparrow 15\%}$} & $44.92$\textcolor{green!50!black}{$_{\uparrow 47\%}$} & $42.17$\textcolor{green!50!black}{$_{\uparrow 53\%}$} & $18.84$\textcolor{green!50!black}{$_{\uparrow 129\%}$}  \\
        \bottomrule
    \end{tabular}
    }
    \vspace{-0.6em}
\end{table}

\paragraph{Combination of SFT and RL for Stable Inner Loop Optimization. }
A straightforward idea involves training the LLM using an optimal optimizer to stabilize its performance. Subsequently, reinforcement learning can be employed to explore improved paths for inner-loop optimization. As evidenced in \Cref{tab:rl-after-sft-performance}, the RL model demonstrates substantial enhancements after supervised fine-tuning.

\begin{takeawaybox}{Takeaway for SFT v.s. RL}
\begin{enumerate}[leftmargin=5mm]
    \item[\ding{182}] SFT provides stable inner loop optimization by training on trajectories from \textit{oracle inner loop optimizer} compared with RL.
    \item[\ding{183}] Combining SFT with RL shows significant performance improvements by utilizing SFT for initializing inner-loop optimization and RL for further exploration.
\end{enumerate}
\end{takeawaybox}

\begin{figure}[t]
    \centering
    \includegraphics[width=.9\linewidth]{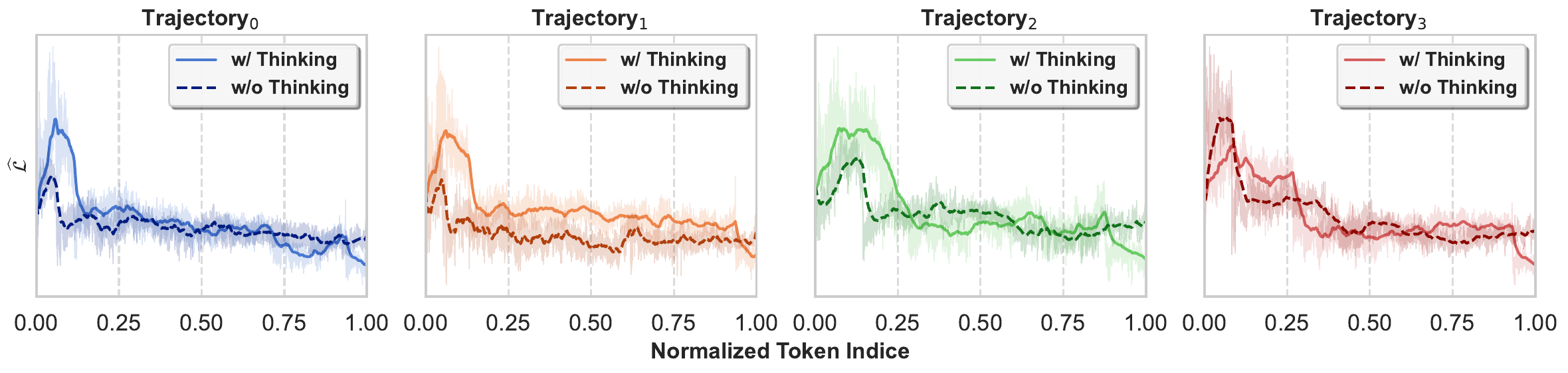}
    \caption{Illustration of QwQ's \textit{pseudo-gradient update} for both thinking and non-thinking modes and refer to \Cref{app:qwen3_update} for more examples. We visualize four pairs of correct reasoning trajectories for one question in AIME24. Compared with thinking trajectories, no-thinking trajectories converge more quickly, which also easily falls into local optimal points.}
    \label{fig:nothinking_gradient_decent}
    \vspace{-0.2em}
\end{figure}

\subsection{Inner Loop Optimization Steps \textit{v.s.} Reasoning Trajectory Tokens} \label{sec:length-of-rt}

In \name, each token in a single reasoning trajectory corresponds to an individual optimization step, and the length of the trajectory indicates the total number of update steps.
We examine these factors by integrating experimental results with related research studies.

\paragraph{Long Reasoning Trajectories Lead to Superior Performance. }
As shown in \Cref{tab:sft-vs-rl-performance}, models trained with longer reasoning trajectories consistently outperform those with shorter trajectories, aligning with meta-learning findings that extended inner-loop updates enhance performance. 
This observation is consistent with the superior performance of long CoT reasoning models, such as DeepSeek-R1 \citep{abs-2501-12948}, suggesting that longer trajectories increase inner-loop update steps, thereby improving LLM reasoning capabilities.

\paragraph{Different Reasoning Trajectory Tokens Represent Varying Roles of Update. }
We focus on discussing two intriguing findings in LLM reasoning. 
First, advanced reasoning in LLMs has been observed to have an \textbf{\textit{aha moment}}~\citep{abs-2501-12948}. 
This refers to specific \textit{reflection tokens} that prompt LLMs to devote additional time to thinking about questions. 
These tokens are also utilized to implement test-time scaling~\citep{abs-2501-19393,ma2025reasoning}.
Following the settings described in \Cref{sec:reasoning-trajectories-as-parameters-update}, we measure the relative changes in the $\mathcal{\widehat{L}}$ value before and after each token position. 
The results are presented in \Cref{fig:reflection_pattern}. 
We observe that reflection tokens such as ``Wait'' and ``Alternatively'' indicate a significant change in the objective. 
From an optimization perspective, we propose that these reflection tokens assist the model in escaping saddle points. 
As the model gradually approaches a stable state, these tokens provide a larger gradient, thereby expanding the exploration space to find a better parameter space.
In the following part, we explore the concept of reasoning efficiency, as discussed by various researchers \citep{abs-2503-21614, abs-2502-18080, zhang2025reasoning, ma2025reasoning, qwen3}. 
This concept involves optimizing the balance between decoding cost and performance utilizing specific segments, such as the end-of-thinking token delimiter.
We hypothesize that these termination delimiters enhance convergence at the optimization level, akin to the role of \textit{momentum} in optimization, facilitating rapid convergence of model parameters within a flatter region. 
However, this acceleration does not always lead to the optimal point.
Also refer to the settings described in \Cref{sec:reasoning-trajectories-as-parameters-update}, we append the end-of-thinking token delimiter \texttt{Therefore, after all this, I believe the answer is} following the thinking token delimiter \texttt{<think>}. 
\Cref{fig:nothinking_gradient_decent} demonstrates that trajectories using the end-of-thinking token delimiter achieve quicker convergence, confirming our hypothesis to some extent.
Since the QwQ model does not completely adapt to the no-thinking mode, we include additional experiments related to Qwen3 in \Cref{app:qwen3_update}. These experiments further substantiate our conclusions.

\begin{takeawaybox}{Takeaway for Reasoning Trajectory Tokens and Inner Loop Optimization Steps}
\begin{enumerate}[leftmargin=5mm]
    \item[\ding{182}] Long reasoning trajectories are analogous to performing additional steps of inner-loop optimization, which improves inner-loop optimization and further enhances outer-loop optimization, i.e., the reasning performance of LLMs.
    \item[\ding{183}] Different tokens serve distinct functions in the inner loop optimization process. For instance, tokens associated with reflection patterns promote the exploration of optimization paths, whereas special tokens regulating the length of reasoning in the recent Long-CoT LLMs facilitate fast-converging optimization steps.
\end{enumerate}
\end{takeawaybox}



\begin{figure}
    \begin{minipage}[t]{0.4\textwidth}
        \centering
        \includegraphics[width=\textwidth]{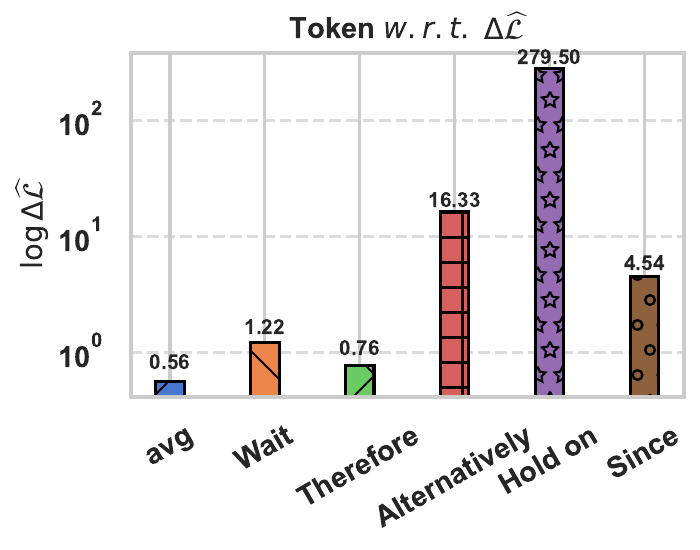}
        \captionof{figure}{Illustration of the influence of reflection tokens. Reflection tokens lead to sharper objective change.} \label{fig:reflection_pattern}
        \vspace{-0.2em}
    \end{minipage}
    \hfill
    \begin{minipage}[t]{0.57\textwidth}
        \centering
        \includegraphics[width=\textwidth]{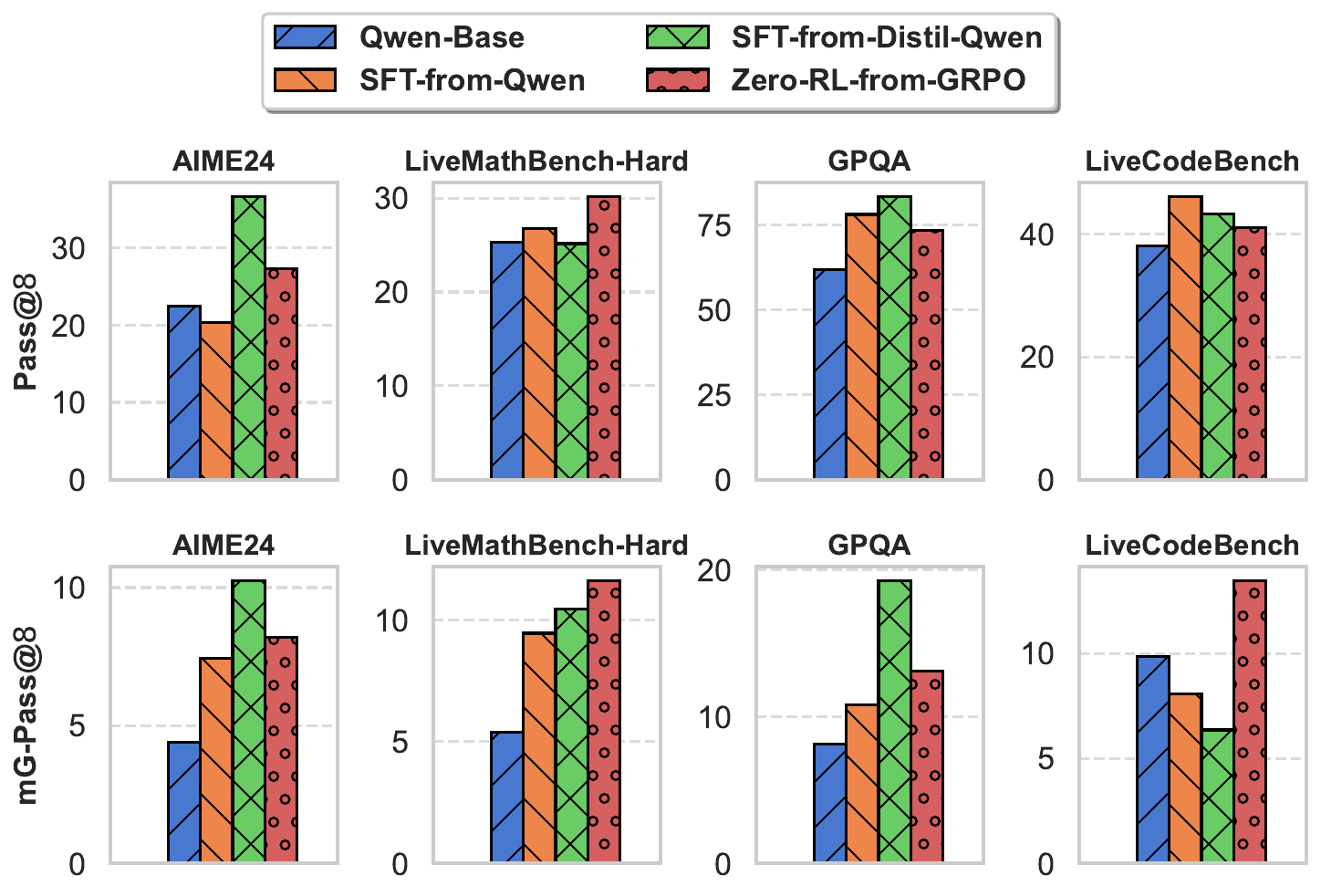}
        \captionof{figure}{Evaluation results of base, SFT and GRPO models on AIME24, LiveMathBench-Hard, GPQA-Diamond, and LiveCodeBench.} \label{fig:generalized_performance}
        \vspace{-0.2em}
    \end{minipage}
\end{figure}

\subsection{Task Generalization \textit{v.s.} Reasoning Generalization} \label{sec:ood-question}

Meta-learning models typically exhibit strong generalization across tasks with similar distributions, since the models learn generalized representations for these tasks. 
We investigate whether this applies to LLM reasoning, where each question is a distinct task but shares fundamental reasoning skills, suggesting a similar distribution.

We analyze generalization from two perspectives: within-domain generalization~(same reasoning domain) and cross-domain generalization~(different reasoning domains). 
Training data, sourced from Open Reasoner Zero (\Cref{sec:experiment-setup-training}), consist of mathematical problems from MATH, making AIME24 and LiveMathBench-Hard suitable for within-domain evaluation. 
Results in \Cref{fig:generalized_performance} show improved performance for both SFT and GRPO models on these benchmarks. 
For cross-domain generalization, we evaluated SFT and GRPO models on GPQA (scientific reasoning) and LiveCodeBench (code reasoning). As illustrated in the right section of \Cref{fig:generalized_performance}, all trained models outperformed the base model on both benchmarks.

Our findings align with existing research. 
For instance, studies have shown that large models trained on code datasets exhibit strong logical reasoning capabilities \citep{abs-2107-03374}. 
Additionally, research indicates that training on mathematical and code corpora mutually enhances performance \citep{WangRZLLSZSZ024,abs-2409-12186,abs-2401-14196}.

\begin{takeawaybox}{Takeaway for Reasoning Generalization}
Training LLMs using trajectories facilitates the learning of shared features across diverse reasoning problems. 
This process, akin to meta-learning, enables the parameters of LLMs to adapt efficiently by new trajectories and demonstrate generalization to out-of-distribution questions.
\end{takeawaybox}

\subsection{More Discussions about LLM Reasoning}
Additional discussions on current research advancements in LLM reasoning are provided in \Cref{app:disscussion-recent-reasoning-progress} within the framework of \name. 
Numerous improvements can be interpreted through the lens of meta-learning, highlighting the significant potential of comprehending LLM reasoning through \name.

\section{Advancing LLM Reasoning Drawing Inspiration from Meta-Learning}

In this section, we undertake several preliminary investigations informed by meta-learning to improve LLM reasoning, demonstrating the feasibility of enhancing LLM reasoning by integrating meta-learning studies.

\subsection{Manipulating Training Reasoning Trajectories per Question} \label{sec:number-of-rt}

Previous studies \citep{AgarwalYS21,ChenWLLZC20} highlight that the size of the support set is of paramount importance in improving performance, stability, and convergence in meta-learning. 
In \name, the support set is intrinsically connected with the number of reasoning trajectories trained per question, prompting the question: \textit{Can enhancements in support set size contribute to more effective training in LLM reasoning?}
In this preliminary study, we investigate the impact of increasing the number of training reasoning trajectories per question on LLM reasoning.

\begin{figure}[tb]
    \centering
    \includegraphics[width=.9\linewidth]{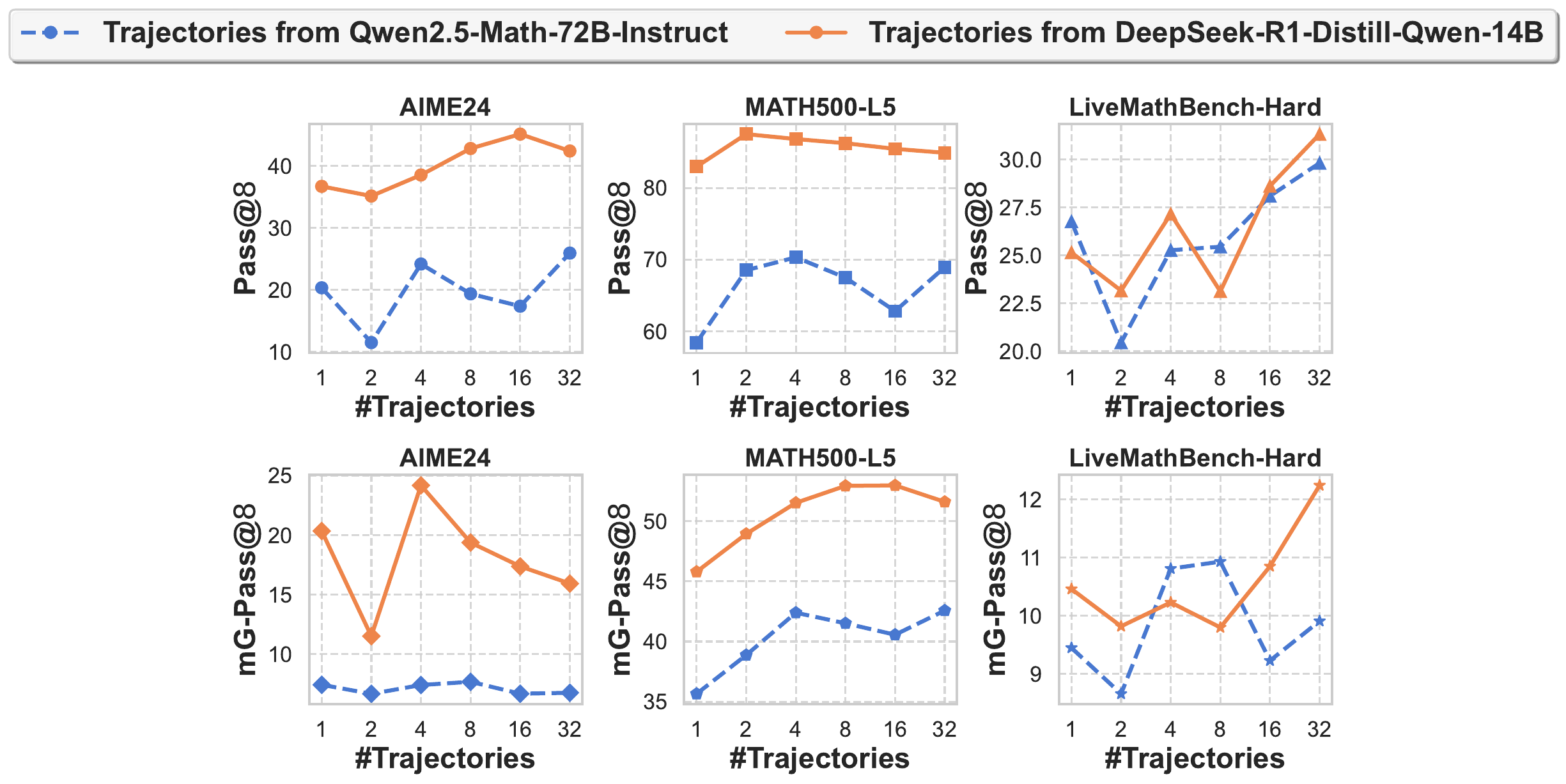}
    \caption{Evaluation results \textit{w.r.t.} different number of reasoning trajectories of SFT models on AIME24, MATH500-L5, and LiveMathBench-Hard.}
    \label{fig:performance_wrt_trajectories}
    \vspace{-0.2em}
\end{figure}

\paragraph{SFT. } 
We train Qwen2.5-7B-Base through SFT with $\{1, 2, 4, 8, 16, 32\}$ synthetic reasoning trajectories, ensuring equal training frequency per question. 
Evaluation results (\Cref{fig:performance_wrt_trajectories}) show that increasing the number of trajectories improves performance and reasoning stability across all benchmarks, suggesting that additional trajectories enhance supervised fine-tuning outcomes \citep{abs-2409-12122}.

\begin{figure}[t]
    \centering
    \includegraphics[width=.95\linewidth]{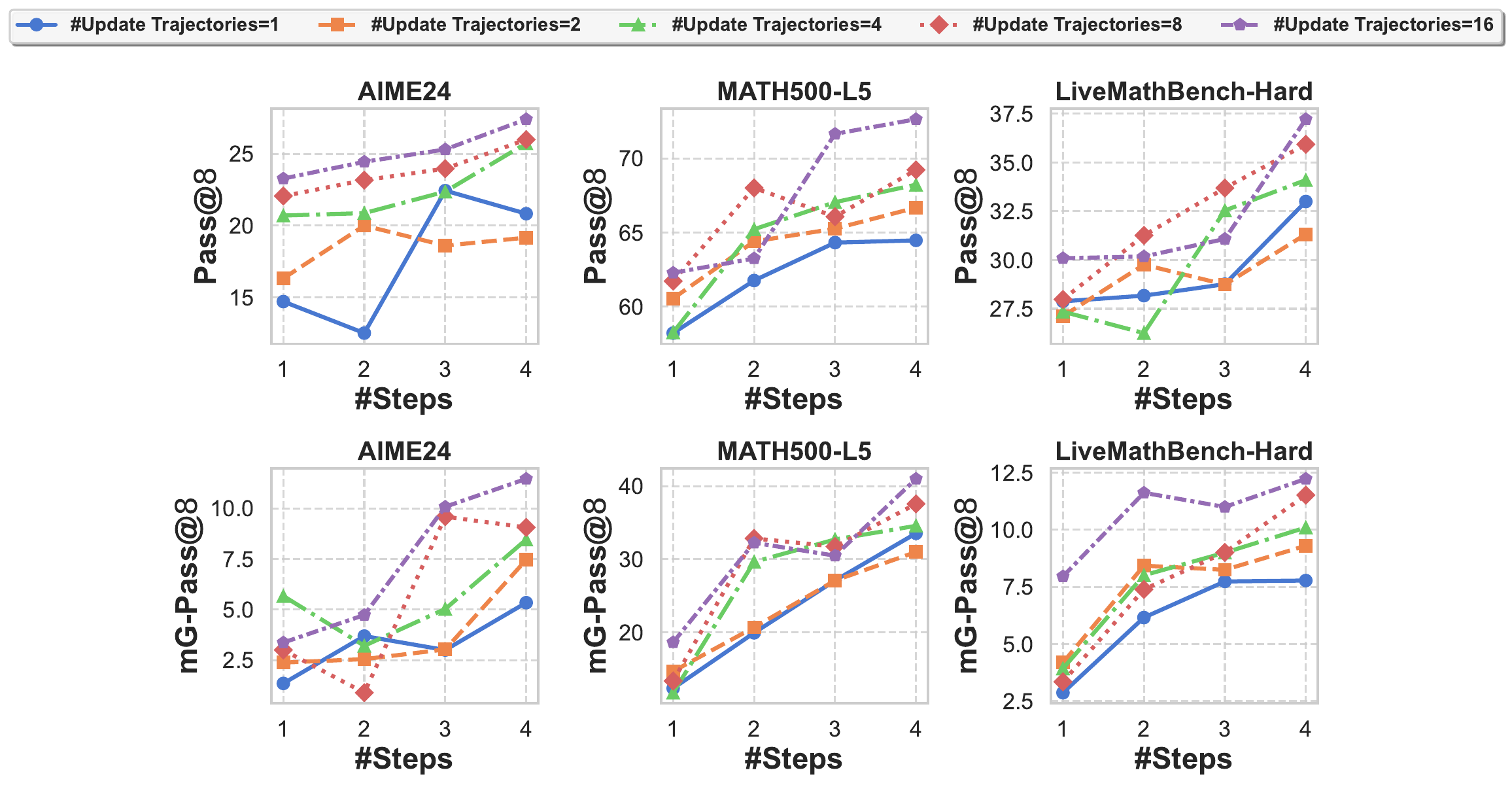}
    \caption{Evaluation results \textit{w.r.t.} different number of reasoning trajectories of GRPO models on AIME24, MATH500-L5, and LiveMathBench-Hard.} \label{fig:grpo_performance_wrt_trajectories}
    \vspace{-0.2em}
\end{figure}

\paragraph{GRPO. }
Regarding to GRPO, the support set size corresponds to the number of trajectories in the rollout group for each prompt~(question). 
To maintain stable advantage estimation in GRPO and ensure a fair comparison, we fix the rollout group size at 16 and calculated the advantage for each trajectory. 
During gradient updates, however, we randomly select $ n \in \{1, 2, 4, 8, 16\} $ trajectories to calculate the gradient.
Experimental results shown in \Cref{fig:grpo_performance_wrt_trajectories} demonstrate that: 
1) multiple trajectories for a single question significantly enhance model performance and stability; 
2) a larger number of trajectories accelerates convergence. 
These findings explain the superior performance and stability of GRPO-based~(or other similar RL algorithms) reasoning models~\citep{abs-2412-13147}, as the GRPO mechanism inherently optimizes for multiple trajectories per question compared with PPO~\citep{SchulmanWDRK17} and naive SFT.
It is noteworthy that some studies have attempted to enhance PPO through group sampling \citep{hu2025open,abs-2504-05118} and achieve competitive performance compared with original PPO.

\begin{observationbox}{Takeaway for Manipulating Training Reasoning Trajectories per Question}
    Training on multiple trajectories per question is akin to enlarging the support set to enhance inner-loop optimization, which consequently leads to better training performance.
\end{observationbox}

\subsection{Incentivizing Reasoning Efficiency by Optimization Lens}
\begin{wrapfigure}{L}{.3\textwidth}
    \centering
    \includegraphics[width=.3\textwidth]{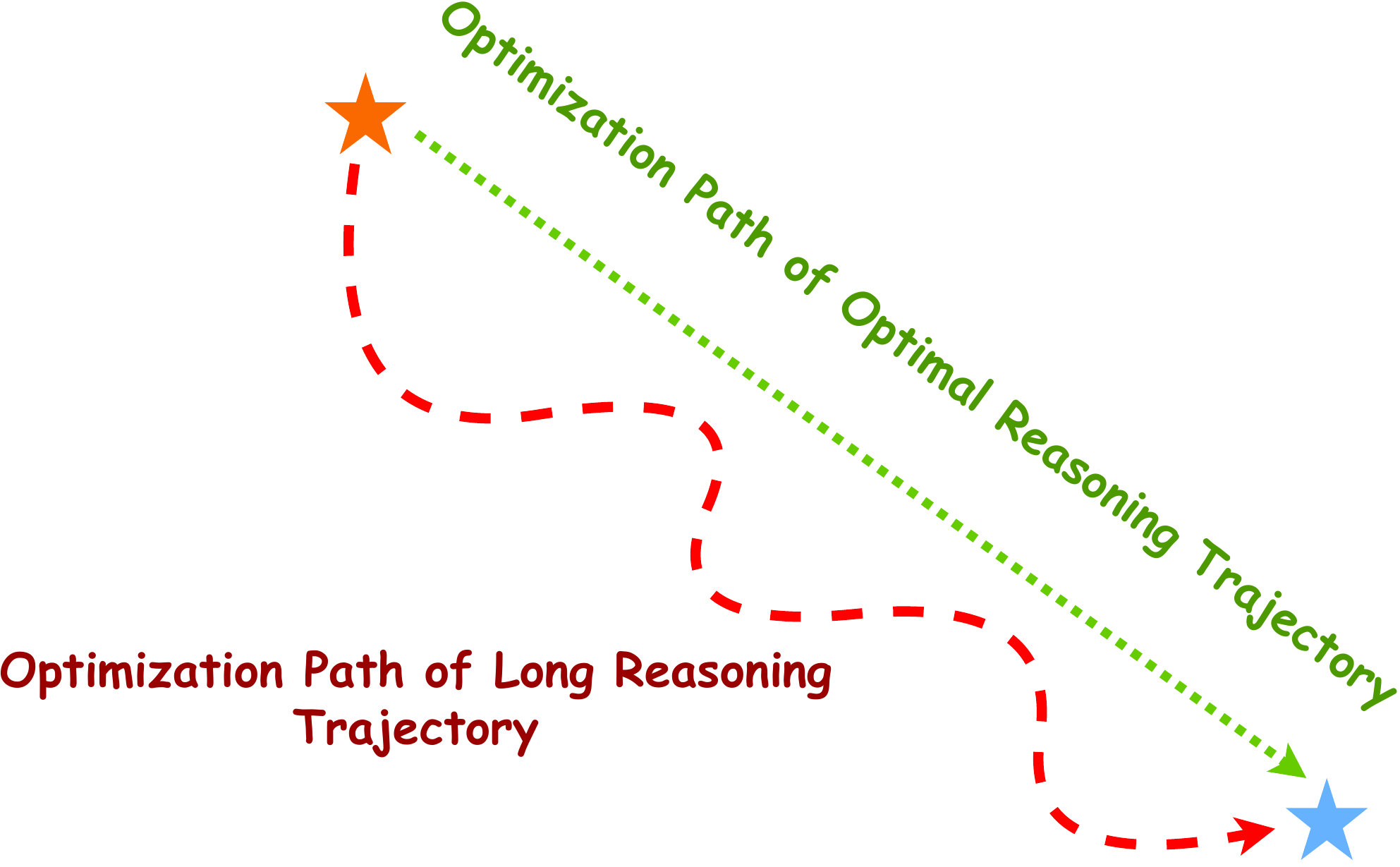}
    \caption{Given a long reasoning trajectory, there exists an optimal corresponding reasoning trajectory which leverage less tokens.} \label{fig:long2short}
    \vspace{-0.2em}
\end{wrapfigure}

Recent advanced reasoning models face limitations due to inefficient and excessively lengthy reasoning trajectories. 
Although several studies \citep{ma2025reasoning,qwen3} have attempted to minimize the number of decoding tokens to mitigate overhead, these approaches frequently lead to decreased reasoning performance, thereby presenting a fundamental question: \textit{Can we reduce the number of reasoning tokens without compromising reasoning performance?} 
As previously discussed, each reasoning trajectory corresponds to an inner-loop optimization trajectory, thus reframing the inquiry as follows: \textit{Can there be a s more effective inner loop optimization path?}
From an optimization perspective, as illustrated in \Cref{fig:long2short}, there exists such an optimal inner loop optimization path. 
In this section, we present a straightforward yet convincing experiment to validate the existence of this inner loop optimization path.

Specifically, we employ Qwen3-32B \citep{qwen3} to generate 16 reasoning trajectories for each question in AIME24, MATH500-L5, and LiveMathBench-Hard. 
These trajectories serve as foundational optimization paths, and our goal is to refine them to discover more optimal paths. 
We propose an heuristic method which condense the reasoning trajectories by using a LLM to summarize the original trajectories into shorter variants, which are then used to prompt Qwen3-32B for answer generation. 
The summarizations are generated by Qwen2.5-32B-Instruct, deliberately excluding answers from the summarized reasoning trajectories. 
We performed four summary generators to reduce the impact of randomness.
\Cref{fig:long2short_exp} displays the experimental results. Notably, we observe that Qwen3-32B's performance with summarized reasoning trajectories is comparable to its performance in thinking mode especially for the Pass@$16$ metric, while significantly reducing the number of tokens in the reasoning trajectories. 
Moreover, Qwen3-32B's performance using summarized reasoning trajectories surpasses that in no-thinking mode, even that tThe latter has more tokens.

\begin{figure}
    \centering
    \includegraphics[width=.85\textwidth]{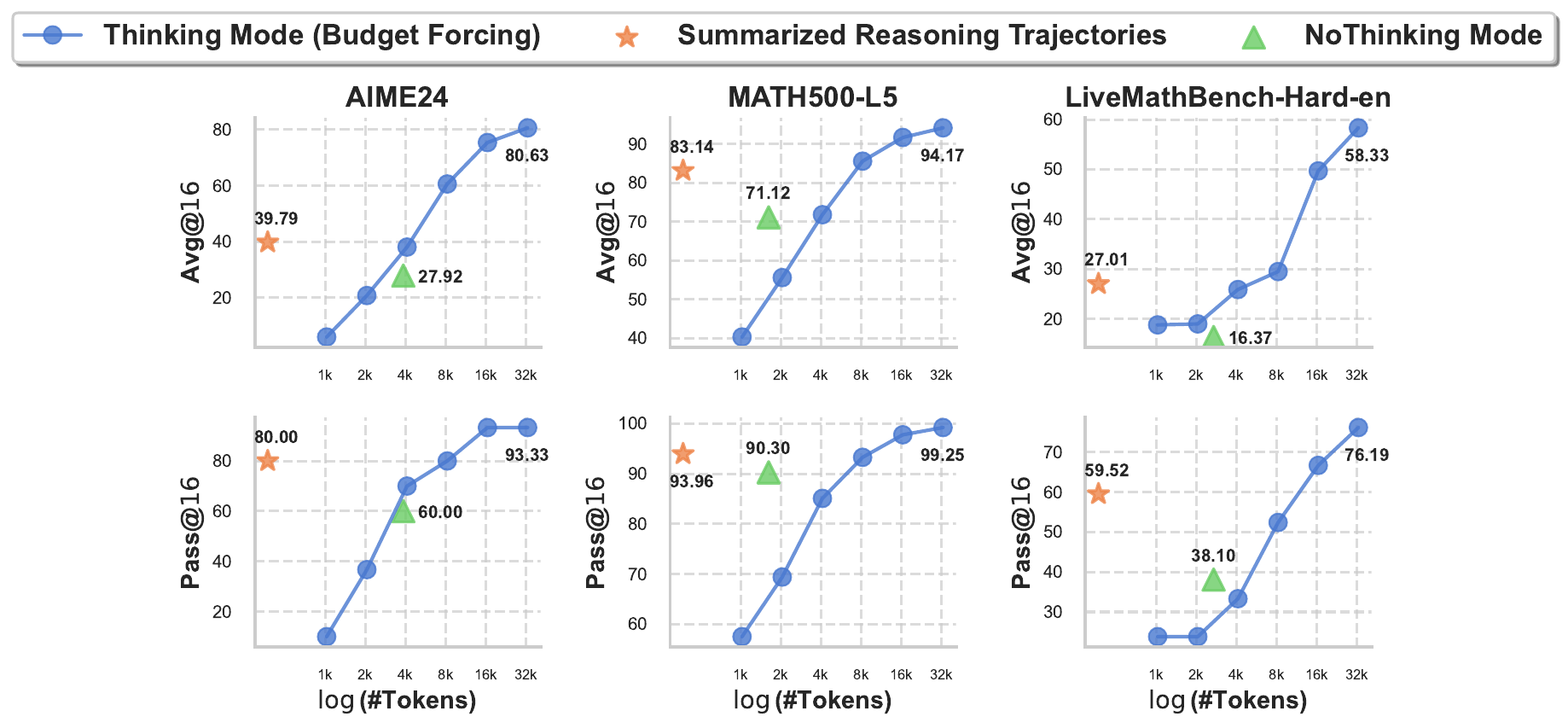}
    \caption{Experimental results of Qwen3-32B with 1) thinking mode, which includes a budget-forcing curve achieved by truncating decoding with various thinking tokens and prompting the model to generate the answer by appending \texttt{$\backslash$n$\backslash$n**Final Answer**$\backslash$n$\backslash$boxed\{}, 2) summarized trajectory, and 3) nothinking mode on AIME24, MATH500-L5, and LiveMathBench-Hard-en.}
    \label{fig:long2short_exp}
    \vspace{-0.2em}
\end{figure}

Our experiments demonstrate that trained long-CoT LLMs have the potential to achieve optimal reasoning trajectories that require fewer tokens while maintaining comparable reasoning performance. 
We approximate these trajectories using a straightforward method, leaving the exploration of more advanced approaches for future work.

\begin{observationbox}{Takeaway for Incentivizing Reasoning Efficiency by Optimization Lens}
    As an optimization, long reasoning trajectories have corresponding optimal trajectories that maintain reasoning performance while using fewer tokens. Find a method to reproduce these optimal trajectories during the decoding stage can enhance reasoning efficiency.
\end{observationbox}

\subsection{Future Directions}

Building on previous discussions and experiments, we demonstrate the feasibility of advancing LLM reasoning by drawing inspiration from meta-learning research. In this section, we outline several potential directions for future work, including:
\begin{itemize}[leftmargin=5mm]
    \item \textbf{Further Understanding Reasoning Trajectories}
    \begin{enumerate}
        \item[\ding{182}] Unlike typical meta-learning frameworks with predefined support sets, the reasoning trajectories in LLMs are self-generated. This implies that LLMs inherently learn gradient update steps without needing explicit support sets for fine-tuning. \textit{Investigating how LLMs learn to form effective reasoning trajectories, namely, gradient update steps}, presents an intriguing challenge.
        \item[\ding{183}] Tokens contribute differently to the modification of model parameters. \textit{What accounts for this disparity among tokens? Is it connected to their semantic properties, and if so, in what manner}?
        \item[\ding{184}] Trajectory-aided reasoning in large language models (LLMs) demonstrates comparable generalization abilities across various tasks. \textit{What aspects of the learning process contribute to this generalization ability, and which meta-features are developed through the optimization of reasoning trajectories?}
    \end{enumerate}
    \item \textbf{Towards Enhancing LLM Reasoning}
    \begin{enumerate}
        \item[\ding{182}] \textit{Improved Reasoning Trajectory Selection Strategy in LLM Training. } During both supervised fine-tuning and reinforcement learning, reasoning trajectories usually remain constant. Could implementing an adaptive sampling mechanism, similar to those utilized in meta-learning~\citep{YaoWWZMLF21,LiuWSFZH20}, enhance training efficacy?
        \item[\ding{183}] \textit{Enhancing Reasoning Efficiency Through an Optimization Perspective. } Given that each token offers a unique contribution to optimization, is there a strategy to discern these contributions to filter out superfluous tokens, thereby improving reasoning efficiency?
        \item[\ding{184}] \textit{Task~(Question) Ratio to Enhance Generalization Across Different Domains. } Insights from related studies, such as Collins et al.~\citep{CollinsMS20} and Wang et al.~\citep{WangGSQ22}, in meta-learning, suggest methods to bolster the reasoning capability of LLMs, enabling them to generalize across domains—for instance, training on mathematical data and inferring insights from coding data.
    \end{enumerate}
\end{itemize}

\section{Related Work}

\paragraph{LLM Reasoning. }
The reasoning capabilities of large language models~(LLMs) have progressively advanced through the development of several key technologies, which have substantially enhanced their performance on complex tasks. 
In-Context Learning~\citep{BrownMRSKDNSSAA20,RubinHB22,MinLHALHZ22,Dong0DZMLXX0C0S24,abs-2405-00200} enables models to perform tasks by interpreting examples provided in prompts without requiring additional training. 
However, this approach relies heavily on the model's pre-trained knowledge and careful prompt design, limiting its effectiveness for complex reasoning tasks~\citep{MinLZH22}. 
The introduction of Chain of Thought~(CoT)~\citep{Wei0SBIXCLZ22,YaoYZS00N23,BestaBKGPGGLNNH24} prompting has significantly improved LLM performance in areas such as mathematical reasoning, commonsense reasoning, and symbolic reasoning by guiding the models to produce intermediate reasoning steps. 
Supervised Fine-Tuning~(SFT) further refines the reasoning capabilities of LLMs by training them with labeled datasets tailored to specific tasks~\citep{numina_math_7b,openthoughts,sky_t1_2025,abs-2502-03387}. 
Reinforcement Learning~(RL), through the use of reward mechanisms, has become a critical approach to optimizing model behavior and enhancing reasoning abilities. 
Recently, Long-Chain of Thought~(Long-CoT) models have emerged as a notable trend in reasoning research, generating detailed reasoning steps to better address complex tasks~\citep{openaiO1,abs-2501-12948,abs-2501-12599,qwq,gemini25}.

\paragraph{Understanding LLMs. }
The remarkable success of LLMs has spurred extensive research into their capabilities.
Early studies \citep{YunBRRK20,YunCBRRK20} explored function approximation, demonstrating that sufficiently large transformers \citep{VaswaniSPUJGKP17} can universally approximate any continuous sequence-to-sequence mapping on a compact domain. 
Subsequent research investigated the computational power of transformers \citep{DehghaniGVUK19,BhattamishraAG20,YaoPPN20,HewittHGLM20,WeissGY21,MerrillSS22,0001CP23,GiannouRS0LP23,LiuAGKZ23}. 
For example, Feng et al. \citep{FengZGY0W23} validated the necessity of chain-of-thought (CoT) prompting for solving complex problems using circuit complexity theory. 
Other works \citep{GatmirySRJK24,abs-2502-21212} demonstrate that transformers can learn to implement learning algorithms, such as gradient descent, within trajectories. 
Several studies \citep{XieRL022,AkyurekSA0Z23,dai2023can,BaiCWXM23,abs-2209-11895,GatmirySRJK24} have focused on understanding in-context learning (ICL) \citep{BrownMRSKDNSSAA20,Dong0DZMLXX0C0S24}, examining the role of demonstration examples. 

\paragraph{Meta-Learning. }
Meta-learning, commonly referred to as ``learning to learn'', aims to enable models to enhance their learning strategies by leveraging prior experience across multiple tasks. 
Early research in this area~\citep{BengioOptimality92,ThrunP98} explored methods for acquiring learning rules applicable to new tasks, with a particular emphasis on lifelong learning. 
These foundational efforts established the basis for creating more adaptable and flexible learning algorithms, paving the way for subsequent advancements. 
Recent meta-learning approaches can generally be categorized into three groups: 1) metric-based methods, which focus on learning a feature space to efficiently compare samples~\citep{SnellSZ17,ChenZWC20,TangLPT20,ZhangWSFC23}; 2) model-based methods, which utilize memory mechanisms or other structures to store and retrieve task-specific information~\citep{WestonCB14,SukhbaatarSWF15,SantoroBBWL16}; and 3) optimization-based methods, which refine the learning process to facilitate rapid adaptation~\citep{FinnAL17,RajeswaranFKL19,YeLYGXZ21}.

\paragraph{Additional Discussion. }
In this part, we aim to elucidate and discuss our work in comparison with several related studies.
Dai et al. \citep{dai2023can} interpret in-context learning (ICL) as LLMs generating meta-gradients from demonstration examples, which are applied to the base GPT model to construct an ICL system. 
In this work, each demonstration example serves as one data sample to update the parameters of LLMs.
In contrast, our research emphasizes trajectory-aided reasoning, viewing each token as an update step and drawing extensive connections to supervised fine-tuning and reinforcement learning, rather than focusing on demonstration examples. 
Additionally, our approach incorporates more general training techniques with explicit parameter optimization, whereas ICL is constrained by limited demonstration examples, which poses certain limitations. 
Another research avenue explored by studies such as Gatmiry et al. \citep{GatmirySRJK24} and others \citep{abs-2502-21212} shows that transformers can learn to implement learning algorithms, such as gradient descent, within a chain of thought. 
However, these studies primarily investigate whether transformers can describe learning algorithms in natural language to solve practical numerical optimization problems, while our work delves into the internal parameter updates of the transformer model, applicable to a broader range of problems.
\section{Conclusion}

This paper introduces a novel perspective on LLM reasoning by integrating it into the meta-learning framework. 
Through both theoretical analysis and empirical validation, we illustrate that reasoning trajectories can be effectively conceptualized as pseudo-gradient updates, enabling a deeper comprehension of how LLMs perform complex reasoning tasks. 
Extensive experiments reveal the correlation between meta-learning and LLM reasoning, suggesting potential avenues for advancing LLM reasoning through meta-learning principles. 
This work not only enhances the understanding of LLM reasoning but also offers practical insights for improving these models using established meta-learning techniques.

\clearpage
\bibliographystyle{plain}
\bibliography{refs}


\clearpage
\appendix
\section{Notations} \label{app:notations}

\begin{table}[htb]
    \centering
    \renewcommand{\arraystretch}{1.15}
    \caption{Illustration of notations used in the paper.} \label{tab:notations}
    \begin{tabularx}{.8\textwidth}{p{0.15\textwidth}X}
        \toprule
        $\mathcal{M}$ & the large language model \\
        $\theta$ & the parameters of the large language model \\
        $q$ & the question \\
        $\mathcal{Q}$ & the question set \\
        $q_i$ & the $i$-th question in the question set \\
        $q_i^j$ & the $j$-th token of the $i$-th question \\
        $a$ & the answer \\
        $\mathcal{A}$ & the answer set \\
        $a_i$ & the $i$-th answer of the $i$-th question \\
        $a_i^j$ & the $j$-th token of the $i$-th answer \\
        $\vert \cdot \vert$ & the length of tokens \\
        $I$ & the instruction \\
        $\mathrm{d}$ & the autoregressive decoding mechanism \\
        $\mathrm{Softmax}$ & the softmax function \\
        $\sigma$ & the activation function \\
        $t$ & the reasoning trajectory \\
        $\mathcal{T}$ & the set of reasoning trajectory \\
        $\bm{E}_o$ & the whole output token embedding of LLM \\
        $\bm{E}_{x,:}$ & the input token embedding of the sequence $x$ \\
        $\bm{E}_{x,i}$ & the input token embedding of the $i$-th token in the sequence $x$ \\
        $\mathrm{Softmax}(\cdot)[x]$ & the value of softmax vector in the entry corresponding to $x$ \\
        $p(\cdot)$ & the probability distribution of one token sequence determined by the LLM \\
        $\Delta \mathcal{M}_{\theta}(\cdot)$ & the variation of the parameter $\theta$ corresponding to the inputs \\
        $\theta^\prime_{i}$ & the $i$-th step updated parameters \\
        $\theta^\prime_{t}$ & the updated parameters $\theta$ corresponding to the reasoning trajectory $t$ \\
        $\left [ \begin{matrix} \bm{x} \\ \bm{y} \end{matrix} \right ]$ & the concatenation of $\bm{x}$ and $\bm{y}$ \\
        $\bm{W}_k,\bm{W}_q,\bm{W}_v$ & the parameters in self-attention layer \\
        $\bm{W}_1,\bm{W}_2,b_1,b_2$ & the parameters in feed-forward network \\
        $\mathcal{L}_q$ & the loss corresponding to the question \\
        \bottomrule
    \end{tabularx}
\end{table}

\section{Proof of \Cref{pro:one-step-pseudo-update}} \label{sec:proof_one-step-pseudo-update}

\begin{proof}

Recall that our objective is to determine the set $\{\bm{W}_q', \bm{W}_k', \bm{W}_v', \bm{W}_1', \bm{W}_2', b_1', b_2'\}$ such that:
\begin{equation}
    \begin{aligned}
        & \bm{W}_2^T \left( \sigma \left( \bm{W}_1^T \left( \mathrm{Softmax} \left( \bm{E}_{t,0} \bm{W}_q \bm{W}_k^T  
        \left [ \begin{matrix}
            \bm{E}_{l,:} \\
            \bm{E}_{t,0} \\
        \end{matrix} \right ]^T
        \right) 
        \left [ \begin{matrix}
            \bm{E}_{l,:} \\
            \bm{E}_{t,0} \\
        \end{matrix} \right ]
        \bm{W}_v \right) + b_1\right) \right) + b_2 = \\
        & \;\;\;\; \bm{W}_2^{\prime T} \left( \sigma \left( \bm{W}_1^{\prime T} \left( \mathrm{Softmax} \left( \bm{E}_{l,-1:} \bm{W}_q^\prime \bm{W}_k^{\prime T} \bm{E}_{l,:}^T \right)  \bm{E}_{l,:} \bm{W}_v^\prime \right) + b_1^\prime \right) \right) + b_2^\prime,
    \end{aligned}
\end{equation}
where $\bm{E}_{l,:} \in \mathbb{R}^{\vert l \vert \times d}$, $\bm{E}_{l,-1}, \bm{E}_{t,0} \in \mathbb{R}^{1 \times d}$, $\bm{W}_q,\bm{W}_q^\prime,\bm{W}_k,\bm{W}_k^\prime,\bm{W}_v,\bm{W}_v^\prime \in \mathbb{R}^{d\times d}$, and $\bm{W}_1,\bm{W}_1^\prime,\bm{W}_2,\bm{W}_2^\prime \in \mathbb{R}^{d\times d}$.

We might as well let $\bm{W}_1^\prime, \bm{W}_2^\prime, b_1^\prime, b_2^\prime$ equal to $\bm{W}_1, \bm{W}_2, b_1, b_2$~(\ding{182}), respectively, as follows:
\begin{equation}
    \begin{aligned}
        \bm{W}_1^\prime &= \bm{W}_1, \\
        \bm{W}_2^\prime &= \bm{W}_2, \\
        b_1^\prime &= b_1, \\
        b_2^\prime &= b_2.
    \end{aligned}
\end{equation}
Then, we only need to establish the following equality:
\begin{equation} \label{eq:attention-equality}
    \begin{aligned}
        & \mathrm{Softmax} \left( \bm{E}_{t,0} \bm{W}_q \bm{W}_k^T  
        \left [ \begin{matrix}
            \bm{E}_{l,:} \\
            \bm{E}_{t,0} \\
        \end{matrix} \right ]^T
        \right) 
        \left [ \begin{matrix}
            \bm{E}_{l,:} \\
            \bm{E}_{t,0} \\
        \end{matrix} \right ]
        \bm{W}_v = \\
        & \;\;\;\; \mathrm{Softmax} \left( \bm{E}_{l,-1:} \bm{W}_q^\prime \bm{W}_k^{\prime T} \bm{E}_{l,:}^T \right)  \bm{E}_{l,:} \bm{W}_v^\prime.
    \end{aligned}
\end{equation}
For simplicity, we refine \Cref{eq:attention-equality} into several parts:
\begin{equation}
    \begin{aligned}
        & \bm{Q} = \bm{E}_{t,0} \bm{W}_q \in \mathbb{R}^{1\times d}, \quad \\
        & \bm{K}^T = \bm{W}_k^T \left[ \bm{E}_{l,:},\bm{E}_{t,0} \right]^T \in \mathbb{R}^{(\vert l\vert + 1)\times d}, \quad \\
        & \bm{V} = \left[ \bm{E}_{l,:},\bm{E}_{t,0} \right] \bm{W}_v \in \mathbb{R}^{(\vert l\vert + 1)\times d}, \\
        & \bm{Q}^\prime = \bm{E}_{l,-1:} \bm{W}_q^\prime \in \mathbb{R}^{1 \times d}, \quad \\
        & \bm{K}^{\prime T} = \bm{W}_k^{\prime T} \bm{E}_{l,:}^T \in \mathbb{R}^{\vert l\vert \times d}, \quad \\
        & \bm{V}^\prime = \bm{E}_{l,:} \bm{W}_v^\prime \in \mathbb{R}^{\vert l\vert \times d},
    \end{aligned}
\end{equation}
where $d$ is the dimension size of embeddings.

Initially, considering the matrices $ \bm{Q} $, we will demonstrate the existence of a linear transformation matrix $ \bm{P} \in \mathbb{R}^{d \times d} $ such that:
\begin{equation} \label{eq:query-equal}
    \bm{E}_{t,0} \bm{P} = \bm{E}_{l,-1:}.
\end{equation}
To support this assertion, we reference \Cref{theo:linear-exist}:
\begin{theorem} \label{theo:linear-exist}
    Let $ U $ and $ V $ be vector spaces, and let $ \{\bm{b}_1, \bm{b}_2, \ldots, \bm{b}_n\} $ denote a basis of $ U $. For $ n $ vectors $ \bm{v}_i \in V $, there exists a linear transformation $ T: U \rightarrow V $ such that $ T(\bm{b}_i) = \bm{v}_i $ for each $ i=1, 2, \ldots, n $.
\end{theorem}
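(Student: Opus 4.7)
The plan is to construct $T$ explicitly by prescribing its values on the basis and extending linearly, then verify that the resulting map is well-defined, linear, and satisfies the required interpolation conditions. Since $\{\bm{b}_1,\ldots,\bm{b}_n\}$ is a basis of $U$, every $\bm{u}\in U$ admits a \emph{unique} expansion $\bm{u}=\sum_{i=1}^n c_i\bm{b}_i$ with scalars $c_i$ in the underlying field. I would define
\begin{equation}
    T(\bm{u}) \;=\; \sum_{i=1}^{n} c_i\, \bm{v}_i,
\end{equation}
which is the only sensible extension consistent with $T(\bm{b}_i)=\bm{v}_i$ and linearity.

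First I would argue that $T$ is well-defined: uniqueness of the coordinate vector $(c_1,\ldots,c_n)$, which follows from the linear independence of the basis, guarantees that the right-hand side of the display is unambiguous. Second, I would verify linearity in two short steps. For additivity, given $\bm{u}=\sum_i c_i\bm{b}_i$ and $\bm{u}'=\sum_i c'_i\bm{b}_i$, the sum decomposes as $\bm{u}+\bm{u}'=\sum_i(c_i+c'_i)\bm{b}_i$, so $T(\bm{u}+\bm{u}')=\sum_i(c_i+c'_i)\bm{v}_i=T(\bm{u})+T(\bm{u}')$. For homogeneity, $\alpha\bm{u}=\sum_i(\alpha c_i)\bm{b}_i$ gives $T(\alpha\bm{u})=\alpha T(\bm{u})$ directly. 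Finally, the interpolation condition is immediate: the coordinate vector of $\bm{b}_j$ is the standard unit vector with $c_i=\delta_{ij}$, hence $T(\bm{b}_j)=\bm{v}_j$.

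There is no genuine obstacle here; the statement is the standard ``universal property'' of a basis in elementary linear algebra. The only subtle point worth making explicit is that \emph{existence} of $T$ is the claim being proved (uniqueness, which is not asked, would follow from the same argument once one requires linearity). I would therefore keep the presentation tight: define $T$ via the coordinate expansion, cite uniqueness of coordinates for well-definedness, check the two linearity axioms in one line each, and conclude by evaluating $T$ on each $\bm{b}_j$. This plugs directly into the main proof by supplying the matrix $\bm{P}$ realizing $\bm{E}_{t,0}\bm{P}=\bm{E}_{l,-1:}$ used just after \Cref{eq:query-equal}.
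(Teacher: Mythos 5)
Your proposal is correct and follows essentially the same route as the paper: define $T$ on an arbitrary vector via its unique coordinate expansion in the basis and set $T(\bm{u})=\sum_i c_i\bm{v}_i$. You merely spell out the well-definedness and linearity checks that the paper leaves as ``evident,'' so there is no substantive difference.
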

\begin{proof}
    We begin by defining a linear transformation $ T: U \rightarrow V $. Let $ \bm{u} $ be a vector in $ U $, expressed as $ \bm{u} = u_1 \bm{b}_1 + u_2 \bm{b}_2 + \cdots + u_n \bm{b}_n $, where the set $ \{\bm{b}_1, \bm{b}_2, \ldots, \bm{b}_n\} $ constitutes a basis and the coefficients $ u_1, u_2, \ldots, u_n $ are determined by $ \bm{u} $. The linear transformation $ T $ is constructed as follows:
    \begin{equation}
        T(\bm{u}) = u_1 \bm{v_1} + u_2 \bm{v_2} + \cdots + u_n \bm{v_n}.
    \end{equation}
    It is evident that this transformation $ T $ satisfies $ T(\bm{b}_i) = \bm{v}_i $ for each index $ i $.
\end{proof}
Based on \Cref{theo:linear-exist}, if we define a basis which involves $\bm{E}_{t,0}$~(e.g., $ \{ \bm{E}_{t,0}, \bm{0}, \ldots, \bm{0} \} $) and construct a corresponding vector space, we can derive a linear transformation matrix $ \bm{P} $ such that \Cref{eq:query-equal} holds, with $ \bm{P} $ being solely dependent on $ \bm{E}_{t,0} $. 
Therefore, if we let $ \bm{W}_q' = \bm{P} \bm{W}_q $~(\ding{183}), then we have $ \bm{Q} = \bm{Q}' $. 
Consequently, we simplify \Cref{eq:attention-equality} to:
\begin{equation}
    \mathrm{Softmax}\left( \bm{Q}\bm{K}^T \right) \bm{V} = \mathrm{Softmax}\left( \bm{Q}\bm{K}^{\prime T} \right) \bm{V}^\prime
\end{equation}

Now considering the matrix $\left [ \begin{matrix}
            \bm{E}_{l,:} \\
            \bm{E}_{t,0} \\
\end{matrix} \right ] \in \mathbb{R}^{(\vert l \vert + 1) \times d}$ and $\bm{E}_{l,:} \in \mathbb{R}^{\vert l\vert \times d}$, we can consistently identify a vector $\bm{C} \in \mathbb{R}^{1 \times \vert l \vert}$ such that:
\begin{equation}
    \bm{E}_{t,0} \approx \bm{C}\bm{E}_{l,:}.
\end{equation}
We examine the existence of $\bm{C}$ in two cases: 1) if $\bm{E}_{t,0}$ lies within the span of the row vectors of $\bm{E}_{l,;}$, then $\bm{C}$ obviously exists; 2) if $\bm{E}_{t,0}$ does not lie within the span of the row vectors of $\bm{E}_{l,;}$, an approximate solution for $\bm{C}$ can be derived using various methods, such as the \textit{least squares method}~\citep{merriman1877list}.
Then, let $\bm{M} = \left[ \bm{I}_l,\bm{C}^T \right] \in \mathbb{R}^{\vert l\vert \times (\vert l \vert + 1)}$, it follows that:
\begin{equation}
    \left [ \begin{matrix}
            \bm{E}_{l,:} \\
            \bm{E}_{t,0} \\
\end{matrix} \right ] \approx \bm{M}^T \bm{E}_{l,:}.
\end{equation}
We can express this relationship mathematically as follows:
\begin{equation}
    \begin{aligned}
        \mathrm{Softmax}\left( \bm{Q}\bm{K}^T \right) \bm{V} &= \mathrm{Softmax}\left( \bm{Q}\bm{W}_k^T  
        \left [ \begin{matrix}
            \bm{E}_{l,:} \\
            \bm{E}_{t,0} \\
        \end{matrix} \right ]^T
        \right) 
        \left [ \begin{matrix}
            \bm{E}_{l,:} \\
            \bm{E}_{t,0} \\
        \end{matrix} \right ]
        \bm{W}_v \\
        &\approx \mathrm{Softmax}\left( \bm{Q}\bm{W}_k^T \bm{E}_{l,:}^T \bm{M} \right)  \bm{M}^T \bm{E}_{l,:} \bm{W}_v \\
        & \Rightarrow \mathrm{Softmax}\left( \bm{Q}\bm{K}^{\prime T} \right) \bm{V}^\prime \\
        & = \mathrm{Softmax}\left( \bm{Q}\bm{W}_k^{\prime T} \bm{E}_{l,:}^T \right)  \bm{E}_{l,:} \bm{W}_v^\prime.
    \end{aligned}
\end{equation}
Thus, we obtain~(\ding{184}):
\begin{equation}
    \begin{aligned}
        \bm{W}_k^{\prime} &= \bm{E}_{l,:}^\dagger \bm{M}^T \bm{E}_{l,;} \bm{W}_k, \\
        \bm{W}_v^{\prime} &= \bm{E}_{l,:}^\dagger \bm{M}^T \bm{E}_{l,;} \bm{W}_v.
    \end{aligned}
\end{equation}
This construction ensures the validity of \Cref{eq:attention-equality}.
In this context, $\bm{E}_{l,:}^\dagger$ indicates the \textit{Moore–Penrose pseudoinverse}~\citep{moore1920reciprocal,bjerhammar1951application,penrose1955generalized} of $\bm{E}_{l,:}$.

Building upon the previous discussions~(\ding{182},\ding{183},\ding{184}), we demonstrate the existence of a parameter set:
$$\{\bm{W}_q', \bm{W}_k', \bm{W}_v', \bm{W}_1', \bm{W}_2', b_1', b_2'\}$$
such that:
\begin{equation}
    \begin{aligned}
        & \bm{W}_2^T \left( \sigma \left( \bm{W}_1^T \left( \mathrm{Softmax} \left( \bm{E}_{t,0} \bm{W}_q \bm{W}_k^T  
        \left [ \begin{matrix}
            \bm{E}_{l,:} \\
            \bm{E}_{t,0} \\
        \end{matrix} \right ]^T
        \right) 
        \left [ \begin{matrix}
            \bm{E}_{l,:} \\
            \bm{E}_{t,0} \\
        \end{matrix} \right ]
        \bm{W}_v \right) + b_1\right) \right) + b_2 = \\
        & \;\;\;\; \bm{W}_2^{\prime T} \left( \sigma \left( \bm{W}_1^{\prime T} \left( \mathrm{Softmax} \left( \bm{E}_{l,-1:} \bm{W}_q^\prime \bm{W}_k^{\prime T} \bm{E}_{l,:}^T \right)  \bm{E}_{l,:} \bm{W}_v^\prime \right) + b_1^\prime \right) \right) + b_2^\prime,
    \end{aligned}
\end{equation}
which proves the \Cref{pro:one-step-pseudo-update}.
And this parameter set may not be the only viable option. For example, according to the universal approximation theorem~\citep{cybenko1989approximations,Hornik91}, a feed-forward network can be utilized to address differences in attention computations and provide a greater degree of freedom for $\bm{W}^\prime_q$, $\bm{W}^\prime_k$, and $\bm{W}^\prime_v$.

\end{proof}

\section{Implementation Details of Experiments}

\subsection{Implementation Details of Visualization of Pseudo Gradient Update} \label{sec:demonstrated-questions}

\paragraph{Data Preparation. } We select four questions from AIME2024 as follows:
\begin{promptbox}{Details of $q_0$}
    \subsection*{Question}
    Every morning Aya goes for a $9$-kilometer-long walk and stops at a coffee shop afterwards. When she walks at a constant speed of $s$ kilometers per hour, the walk takes her 4 hours, including $t$ minutes spent in the coffee shop. When she walks $s+2$ kilometers per hour, the walk takes her 2 hours and 24 minutes, including $t$ minutes spent in the coffee shop. Suppose Aya walks at $s+\frac{1}{2}$ kilometers per hour. Find the number of minutes the walk takes her, including the $t$ minutes spent in the coffee shop.
    \vspace{20pt}

    \subsection*{Answer}
    204
\end{promptbox}

\begin{promptbox}{Details of $q_1$}
    \subsection*{Question}
    There exist real numbers $x$ and $y$, both greater than 1, such that $\log_x\left(y^x\right)=\log_y\left(x^{4y}\right)=10$. Find $xy$.
    \vspace{15pt}

    \subsection*{Answer}
    025
\end{promptbox}

\begin{promptbox}{Details of $q_2$}
    \subsection*{Question}
    Find the largest possible real part of \[(75+117i)z+\frac{96+144i}{z}\]where $z$ is a complex number with $|z|=4$.
    \vspace{20pt}

    \subsection*{Answer}
    540
\end{promptbox}

\begin{promptbox}{Details of $q_3$}
    \subsection*{Question}
    Let $\triangle ABC$ have circumcenter $O$ and incenter $I$ with $\overline{IA}\perp\overline{OI}$, circumradius $13$, and inradius $6$. Find $AB\cdot AC$.
    \vspace{20pt}

    \subsection*{Answer}
    468
\end{promptbox}

\paragraph{Visualization of Pseudo Gradient Update. } We leverage QwQ-32B~\citep{qwq} to generate trajectories for these four questions.
Then for each trajectory, we calculate the negative log-probability of \texttt{Final Answer$\backslash$n$\backslash$boxed{..answer..}} at each position.
\Cref{algo:empirical_eamples_of_pseudo_gradient_update} outlines the overall process.

\IncMargin{1em}
\begin{algorithm}[h]
    \KwIn{$\mathcal{M}$: QwQ-32B, $I$: instruction, $q$: question from AIME2024, $t$: the trajectory generated by QwQ-32B, $a$: the answer sequence, i.e., \texttt{Final Answer$\backslash$n$\backslash$boxed{...answer...}}, $s$: step size.}
    \BlankLine
    \For{i $\in$ $[0, \vert t \vert)$} {
        Obtain input by $I \oplus q \oplus t_{:i} \oplus a$ \;
        Feed input to $M$ and get logits $l_{a}$ of answer sequence \;
        Compute the negative log-probability using $l_a$ \;
    }
    \caption{Computation of Empirical Examples of Pseudo Gradient Update} \label{algo:empirical_eamples_of_pseudo_gradient_update}
\end{algorithm}

\paragraph{Visualization of Landscape. } 
We refer to the methodology proposed by Li et al.~\citep{Li0TSG18}.
Assuming the set parameters of QwQ-32B is denoted by $\{\theta_k \}$ (excluding the embedding matrix), we randomly select two vectors, $\{\theta_{1,k}\}$ and $\{ \theta_{2,k} \}$, for each parameter. 
We then edit the parameters by adding $\alpha_1 \theta_{1,k} + \alpha_2 \theta_{2,k}$ and compute the negative log-probability given only the instruction and question to form the point set $\{(\alpha_1, \alpha_2, \mathcal{\widehat{L}}_{\alpha_1, \alpha_2})\}$. 
Finally, we visualize this point set to reveal the landscape.
The overall process is summarized as \Cref{algo:landscape}.

\IncMargin{1em}
\begin{algorithm}[h]
    \KwIn{$\mathcal{M}$: QwQ-32B, $I$: instruction, $q$: question from AIME2024, $a$: the answer sequence, i.e., \texttt{Final Answer$\backslash$n$\backslash$boxed{..answer..}}.}
    \BlankLine
    Obtain random vectors $\{ \theta_{1,k}\}$, $\{ \theta_{2,k} \}$ for each parameter $\theta_k$ of $\mathcal{M}$ \;
    \For{i $\in$ $[-1, 1, s]$}{
        \For{j $\in$ $[-1, 1, s]$}{
            Get edited parameters $\{\theta_{k}^\prime\}$ by adding $i \cdot \theta_{1,k} + j \cdot \theta_{2,k}$ \;
            Obtain input by $I \oplus q \oplus a$ \;
            Feed input to $M$ and get logits $l_{a}$ of answer sequence \;
            Compute the negative log-probability using $l_a$ \;
        }
    }
    \caption{Computation of Landscape} \label{algo:landscape}
\end{algorithm}

\paragraph{Project the Pseudo Gradient Update to Landscape. }
To project the trajectory of the pseudo-gradient update onto the landscape, we fix one direction corresponding to the time dimension and identify the closest contour to the corresponding $\mathcal{\widehat{L}}$ to determine another direction.

\subsection{Training Details} \label{app:training-details}

\paragraph{Dataset Processing. }
To maintain the validity and verifiability of the question set, we clean and filter the original dataset. 
Initially, we exclude incomplete questions as well as those lacking answers. 
Subsequently, we remove questions requiring reasoning with images or other external information. 
To further ensure verifiability, we employ Math-Verify~\footnote{\url{https://github.com/huggingface/Math-Verify}} to examine each question and exclude those that could not be verified. Finally, we eliminate irrelevant characters, such as URLs and HTML tags, resulting in approximately $39$k questions with corresponding answers.

\paragraph{Training of SFT. } 
We first synthesize training trajectories from Qwen2.5-Math-72B-Instruct~\citep{abs-2409-12122} and DeepSeek-R1-Distill-Qwen-14B~\citep{abs-2501-12948}. 
From the entire question set, we sample $10$k questions and use the sampling parameters shown in~\Cref{tab:syn-gen-param} to generate reasoning trajectories with the prompt, \textit{Please solve the following mathematical problem step by step and put your final answer in \textbackslash boxed{}}, resulting in $640$k trajectories.
\begin{table}[hbt]
    \centering
    \caption{Sampling parameters leveraging for reasoning trajectories synthesis.} \label{tab:syn-gen-param}
    \begin{tabular}{l|cc}
        \toprule
        & \textbf{Qwen2.5-Math-72B-Instruct} & \textbf{DeepSeek-R1-Distill-Qwen-14B} \\
        \midrule
        Temperature & $0.7$ & $0.7$ \\
        Top-$p$ & $1.0$ & $1.0$ \\
        Top-$k$ & $50$ & $50$ \\
        Max Tokens & $8192$ & $36784$ \\
        Rollout Number & $64$ & $64$ \\
        \bottomrule
    \end{tabular}
\end{table}
We then filter out trajectories with incorrect answers, retaining approximately $\sim470$k for Qwen2.5-Math-72B-Instruct and approximately $\sim550$k for DeepSeek-R1-Distill-Qwen-14B. 
During training, we utilize the parameters listed in \Cref{tab:sft-param}.

\begin{table}[t]
    \centering
    \begin{minipage}[t]{0.45\textwidth}
        \centering
        \captionof{table}{Training parameters for SFT.} \label{tab:sft-param}
        \begin{tabular}{l|c}
            \toprule
            & \textbf{Parameter} \\
            \midrule
            Max Response Length & $18432$ \\
            Train Batch Size & $256$ \\
            Learning Rate & $1e\text{-}5$ \\
            Total Epochs & $1$ \\
            \bottomrule
        \end{tabular}
    \end{minipage}
    \begin{minipage}[t]{0.45\textwidth}
        \centering
        \captionof{table}{Training parameters for GRPO.} \label{tab:grpo-param}
        \begin{tabular}{l|c}
            \toprule
            & \textbf{Parameter} \\
            \midrule
            Max Prompt Length & $1024$ \\
            Max Response Length & $16384$ \\
            Rollout Temperature & $1.0$ \\
            Rollout Number & $16$ \\
            Train Batch Size & $1024$ \\
            Learning Rate & $1e\text{-}6$ \\
            Total Epochs & $1$ \\
            \bottomrule
        \end{tabular}
    \end{minipage}
\end{table}


\begin{promptbox}{System Prompt of GRPO}
    A conversation between a User and an Assistant. The User poses a question, and the Assistant provides a solution. The Assistant's response follows these structured steps: 

    \vspace{5pt}
    1. \textbf{Reasoning Process}: The Assistant reflects on the problem using a reasoning process enclosed within <think> and </think> tags.
    
    2. \textbf{Conclusion}: The Assistant reaches a conclusion, which is enclosed within <conclusion> and </conclusion> tags. The final answer is highlighted within \textbackslash boxed{...final answer...}.
    
    3. \textbf{Answer Format}: The complete response should be formatted as:

    <think>
    
    ...reasoning process...
    
    </think>
    
    <conclusion>
    
    ...conclusion...
    
    The answer is \textbackslash boxed{...final answer...}
    
    </conclusion>
    
\end{promptbox}
\paragraph{Training of GRPO. } 
For the GRPO training, we use the complete question set and apply the parameters listed in \Cref{tab:grpo-param}. 
We adhere to the DeepSeek-R1-style system prompt, as presented in the \textit{System Prompt of GRPO} box.
And for the reward design, we assign the trajectory with the correct answer and correct format the score $1$, the trajectory with the false answer and correct format the score $0.0$, trajectory with the correct answer and false format the score $-0.5$, and trajectory with the false answer and false format the score $-1$, formally:
\begin{equation}
    R(y^\prime, y) = \left\{
    \begin{aligned}
        &1 & \mathrm{answer\_match} (y ^\prime, y) \quad &\mathrm{and}\quad \mathrm{format\_correct}(y^\prime), \\
        &0 & \neg \mathrm{answer\_match} (y ^\prime, y) \quad &\mathrm{and}\quad \mathrm{format\_correct}(y^\prime), \\
        &-0.5 & \mathrm{answer\_match} (y ^\prime, y) \quad &\mathrm{and}\quad \neg \mathrm{format\_correct}(y^\prime), \\
        &-1 & \neg \mathrm{answer\_match} (y ^\prime, y) \quad &\mathrm{and}\quad \neg\mathrm{format\_correct}(y^\prime), \\
    \end{aligned}  
    \right.
\end{equation}
where $y$ indicates the ground-truth and $y^\prime$ indicates the trajectory.
We employ the Math-Verify package to ascertain the equivalence of $y$ and $y'$.

\paragraph{Details of Hardware and Software. }
All the training tasks are conducted based on veRL~\citep{ShengZYWZZPL025}, cooperated with Pytorch~\citep{PaszkeGMLBCKLGA19} 2.6.0, Transformers~\citep{WolfDSCDMCRLFDS20} 4.51.3, vLLM~\citep{KwonLZ0ZY0ZS23} 0.8.4. 
We conduct all experiments on clusters equipped with NVIDIA A800 GPUs and Intel(R) Xeon(R) Platinum 8336C CPUs.

\subsection{Evaluation Details} \label{app:evaluation-details}

\paragraph{Benchmarks. }
The following details describe our evaluation benchmarks:
\begin{itemize}[leftmargin=5mm]
    \item \textbf{AIME24.} AIME24\footnote{\url{https://huggingface.co/datasets/AI-MO/aimo-validation-aime}} consists of $30$ challenging questions from the 2024 American Invitational Mathematics Examination (AIME).
    \item \textbf{MATH500.} The original MATH dataset \citep{HendrycksBKABTS21} comprises $12,500$ problems from American high school mathematics competitions. MATH500 \citep{LightmanKBEBLLS24}, a widely used subset of its test split, includes only Level 5 questions in this study.
    \item \textbf{LiveMathBench.} LiveMathBench \citep{abs-2412-13147} is a continuously updated dataset of challenging mathematical problems. We use the December 2024 hard split, comprising $45$ questions in English and Chinese.
    \item \textbf{GPQA.} GPQA~\citep{abs-2311-12022} dataset is a challenging, professional multiple-choice science question-answering dataset. We use its diamond subset, comprising $198$ questions.
    \item \textbf{LiveCodeBench.} LiveCodeBench~\citep{abs-2403-07974} is a benchmark designed for a comprehensive and uncontaminated evaluation of the code-related capabilities of LLMs. It incorporates questions from LeetCode, AtCoder, and Codeforces.
\end{itemize}

\paragraph{Metrics. }
We use Pass@$k$ and mG-Pass@$k$~\citep{abs-2412-13147} as evaluation metrics. 
We generate $n$ responses for each question and assume the number of correct responses is $c$.
Then the metrics are computed as:
\begin{itemize}[leftmargin=5mm]
    \item \textbf{Pass@$k$. }
    \begin{equation}
        \text{Pass@}k = \mathbb{E}_{\text{questions}} \left[ 1 - \frac{{{n - c} \choose k}}{{n \choose k}} \right].
    \end{equation}
    \item \textbf{mG-Pass@$k$. }
    \begin{equation}
        \text{mG-Pass@}k = \mathbb{E}_{\text{questions}} \left[ \frac{2}{k} \sum_{i= \lceil k / 2 \rceil + 1}^{k}  \sum_{j=i}^c \frac{{c \choose j} \cdot {n - c \choose k - j}}{{n \choose k}}  \right].
    \end{equation}
\end{itemize}

\section{More Discussions on Recent LLM Reasoning Progress} \label{app:disscussion-recent-reasoning-progress}

In this section, we focus on recent research developments and discuss the essential improvements they implemented to enhance performance within our framework.
We involve the following representative works: OpenThoughts~\citep{openthoughts}, Light-R1~\citep{abs-2503-10460}, Open-Reason-Zero~\citep{hu2025open}, DAPO~\citep{abs-2503-14476}, VAPO\citep{abs-2504-05118}, GPG~\citep{chu2025gpg}, Llama Nemotron~\citep{abs-2505-00949}.

\paragraph{Data Filtering. } 
Works such as Light-R1~\citep{abs-2503-10460} use strategies like diversity and difficulty filtering to obtain high-quality data. From a meta-learning perspective, this approach can be seen as a sample mining strategy, optimizing the distribution of training task sets to enhance the efficiency of model training.

\paragraph{Synthetic Data From Strong Reasoning LLMs. } 
Works such as OpenThoughts~\citep{openthoughts} and Llama Nemotron~\citep{abs-2505-00949} utilize a more advanced reasoning LLM, such as DeepSeek-R1, to generate multiple trajectories for each training question, resulting in training data for SFT. 
This approach effectively expands the size of the support set to stabilize inner loop optimization, thereby achieving improved results.
On the other hand, this is equivalent to distilling the optimization path from the already trained model~(strong reasoning LLMs) to the small model.

\paragraph{Clip Higher for Clipper Surrogate Loss of RL. }
DAPO~\citep{abs-2503-14476} proposes using a higher clipping range to promote exploration during the GRPO training process. 
Similarly, removing the KL penalty term and entropy loss in GRPO can achieve the same effect. 
From an optimization perspective, these improvements expand the exploration space of the optimization path, facilitating the model's ability to explore extreme points. 
Increasing the diversity of training data in supervised fine-tuning also contributes to this effect.

\paragraph{Dynamic Sampling During RL. }
Recent studies \citep{chu2025gpg,abs-2503-14476} aim to balance the ratio of correct to incorrect trajectories during rollout by employing dynamic sampling or introducing bias. This strategy equalizes the positive and negative gradients in the inner loop, thereby alleviating model overfitting to a particular class.

\paragraph{Group-Sampling for PPO. }
In classic reinforcement learning methodologies, algorithms typically generate only a single trajectory per problem instance. 
Recent advancements \citep{abs-2504-05118,hu2025open} have introduced group sampling in algorithms such as PPO, allowing the generation of multiple trajectories for each problem. 
From the perspective of this study, this improvement corresponds to expanding the support set, thereby enhancing inner-loop optimization.

\section{More Experimental Results} \label{app:qwen3_update}

\paragraph{QwQ's Pseudo Gradient Update. }
\Cref{fig:qwq-gradient-decent-appendix} illustrates more visualizations of the \textit{pseudo-gradient update} of QwQ.

\paragraph{QwQ3's Pseudo Gradient Update in Thinking/Nothinking Mode. }
\Cref{fig:qwq-nothinking-gradient-decent-appendix} illustrates more visualizations of the \textit{pseudo-gradient update} of QwQ in thinking and nothinking modes.

\paragraph{Qwen3's Pseudo Gradient Update. }
Following the methodology described in \Cref{sec:reasoning-trajectories-as-parameters-update} and \Cref{sec:demonstrated-questions}, we monitor the pseudo-gradient update of Qwen3-32B \citep{qwen3} under thinking mode, as illustrated in \Cref{fig:qwen_gradient_decent}. 
We similarly observe that the reasoning trajectories of Qwen3 exhibit a parameter update effect. 

\paragraph{Qwen3's Pseudo Gradient Update in Thinking/Nothinking Mode. }
Referring to the settings in \Cref{sec:length-of-rt}, we examine the differences between thinking mode and no-thinking mode, as shown in \Cref{fig:qwen_nothinking_gradient_decent}. It is evident that, due to the specific optimization of Qwen3, its no-thinking token delimiter~(i.e., \texttt{</think>}) demonstrates a more pronounced gradient descent effect. The delimiter \texttt{</think>} enables the model to swiftly update to an extreme point in the appropriate direction with a larger step size. However, this update is susceptible to falling into local minima, which accounts for the performance gap between Qwen3's no-thinking mode and thinking mode.

\paragraph{Pseudo Gradient Update of False Reasoning Trajectories. }
\Cref{fig:qwen_nothinking_false_gradient_decent} illustrates the curve of pseudo gradient updates associated with incorrect reasoning trajectories. It is evident that the curve representing these trajectories does not show a downward trend, underscoring the strong connection between reasoning paths and optimization processes.

\begin{figure} \ContinuedFloat*
    \centering
    \begin{subfigure}{\textwidth}
        \centering
        \includegraphics[width=.9\textwidth]{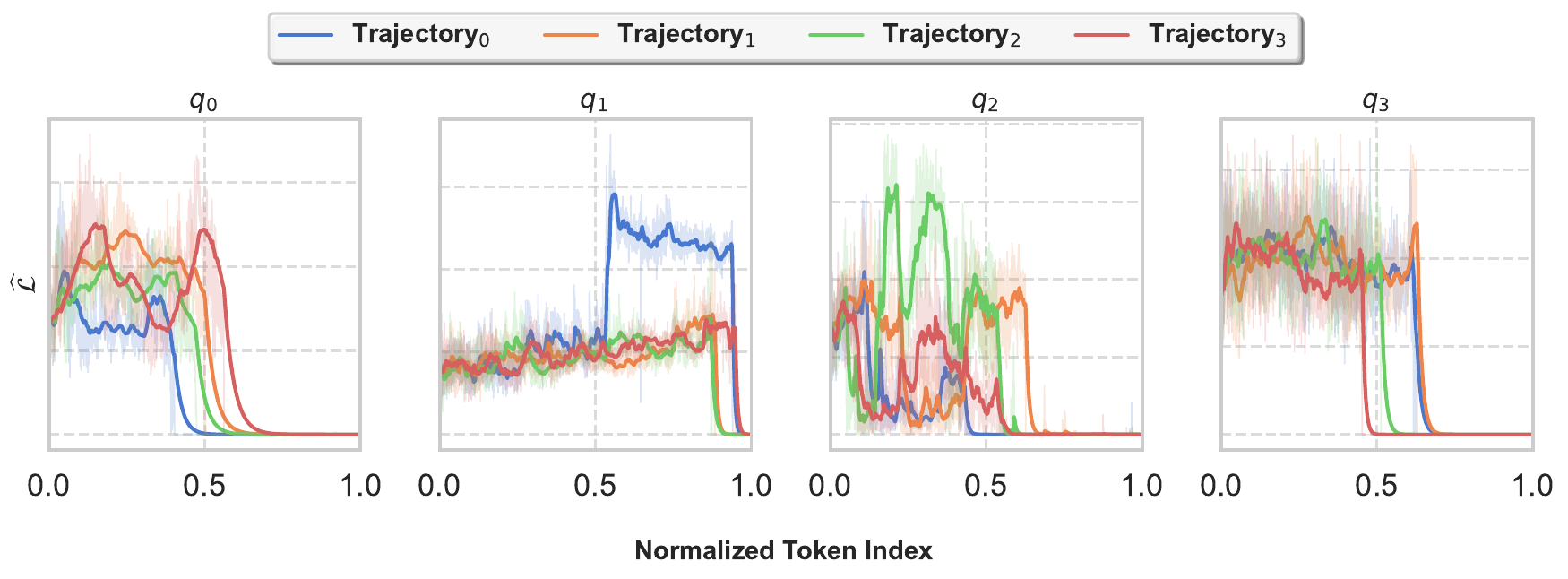}
    \end{subfigure}
    \begin{subfigure}{\textwidth}
        \centering
        \includegraphics[width=.9\textwidth]{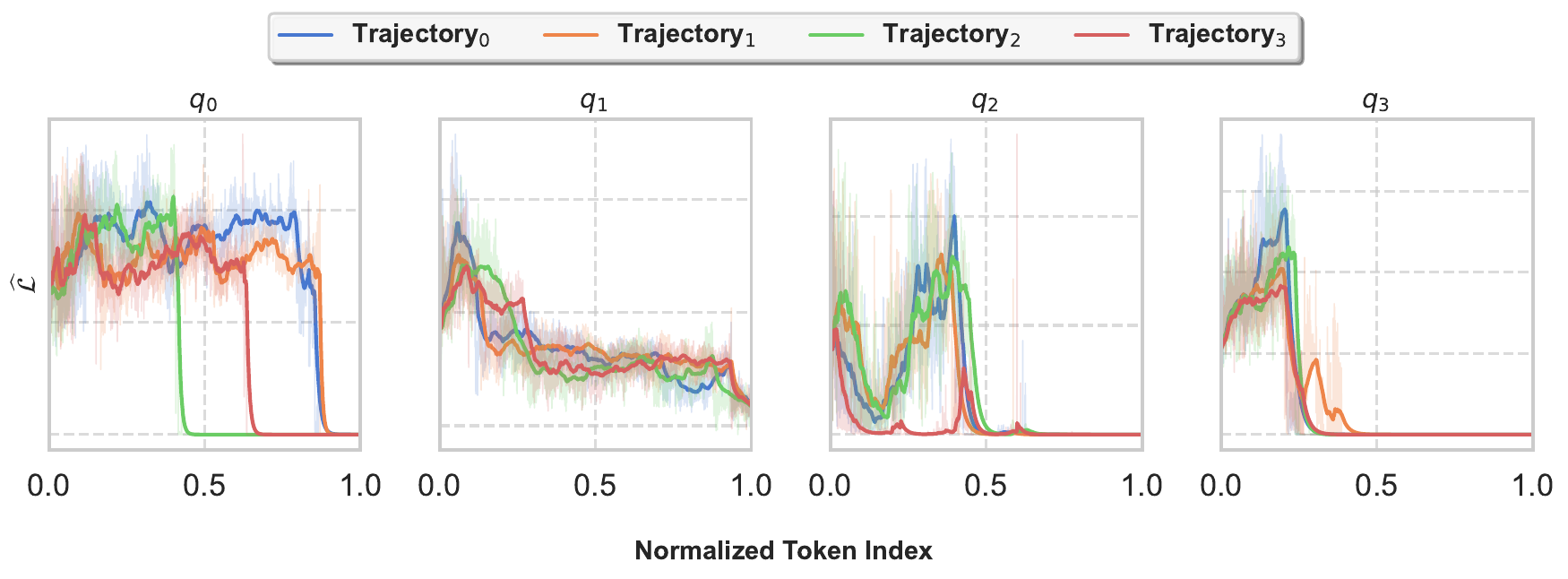}
    \end{subfigure}
    \begin{subfigure}{\textwidth}
        \centering
        \includegraphics[width=.9\textwidth]{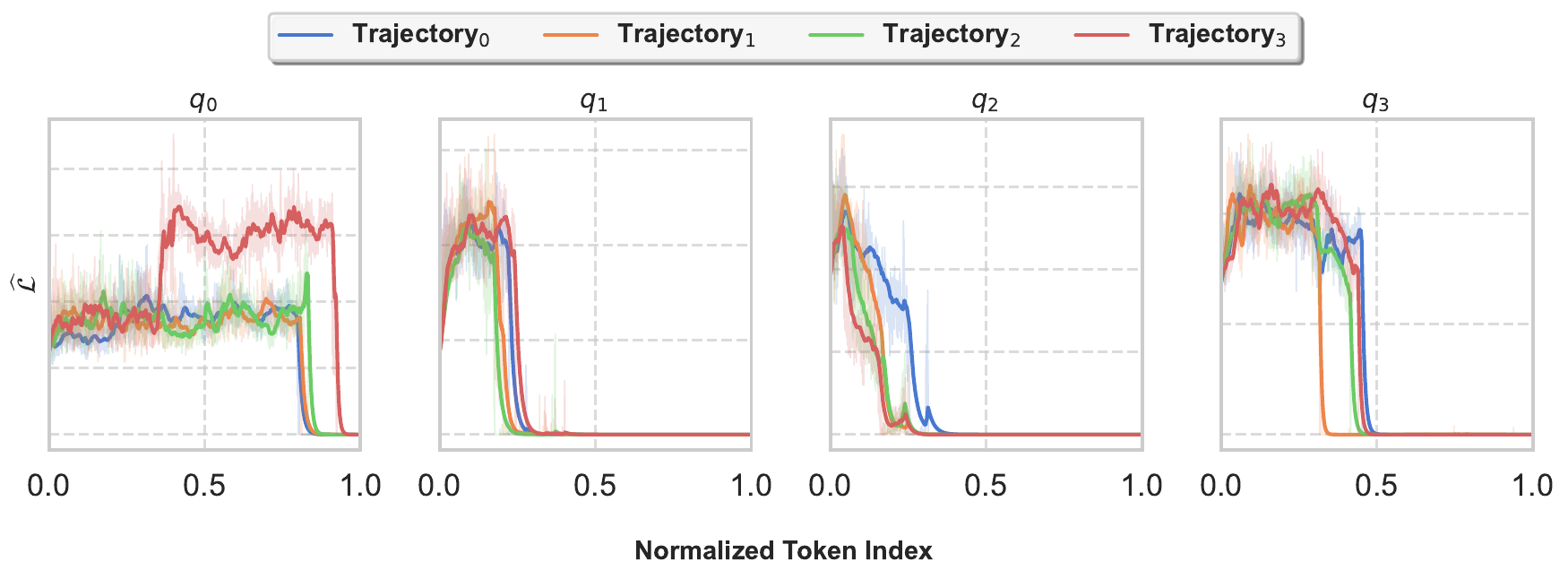}
    \end{subfigure}
    \begin{subfigure}{\textwidth}
        \centering
        \includegraphics[width=.9\textwidth]{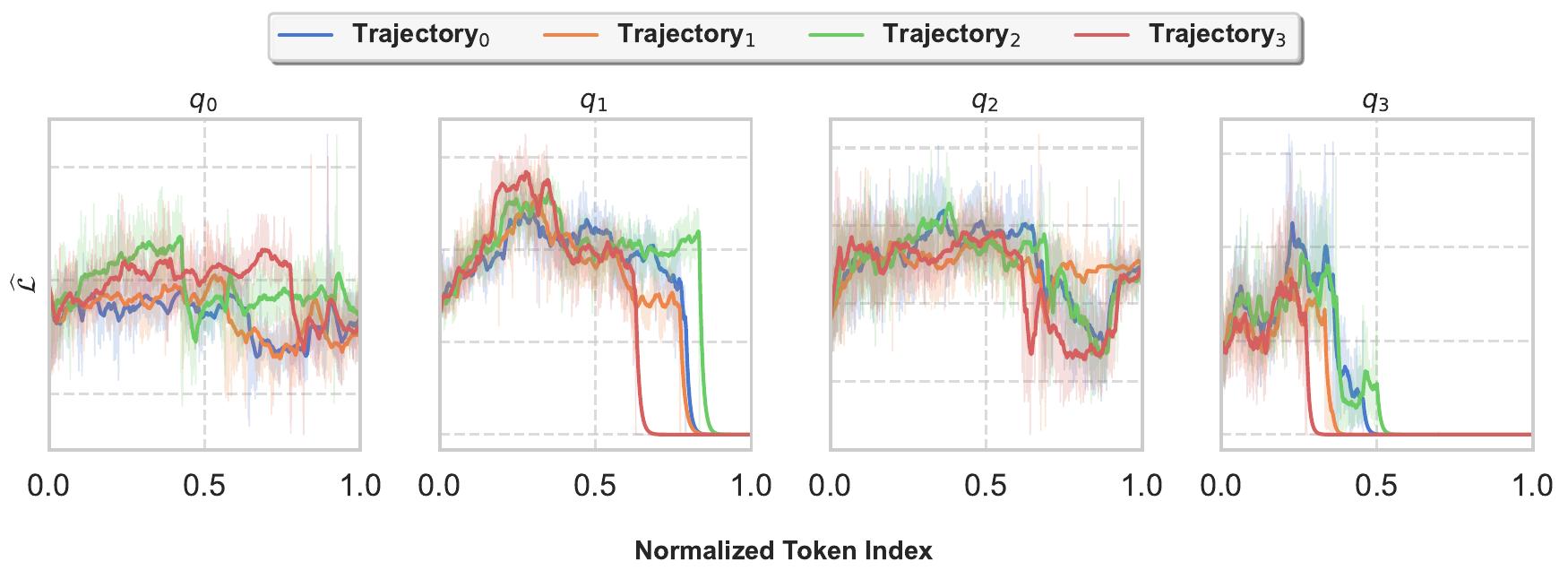}
    \end{subfigure}
    \caption{More visualizations of the \textit{pseudo-gradient update} of QwQ.}
\end{figure}

\begin{figure} \ContinuedFloat
    \begin{subfigure}{\textwidth}
        \centering
        \includegraphics[width=.9\textwidth]{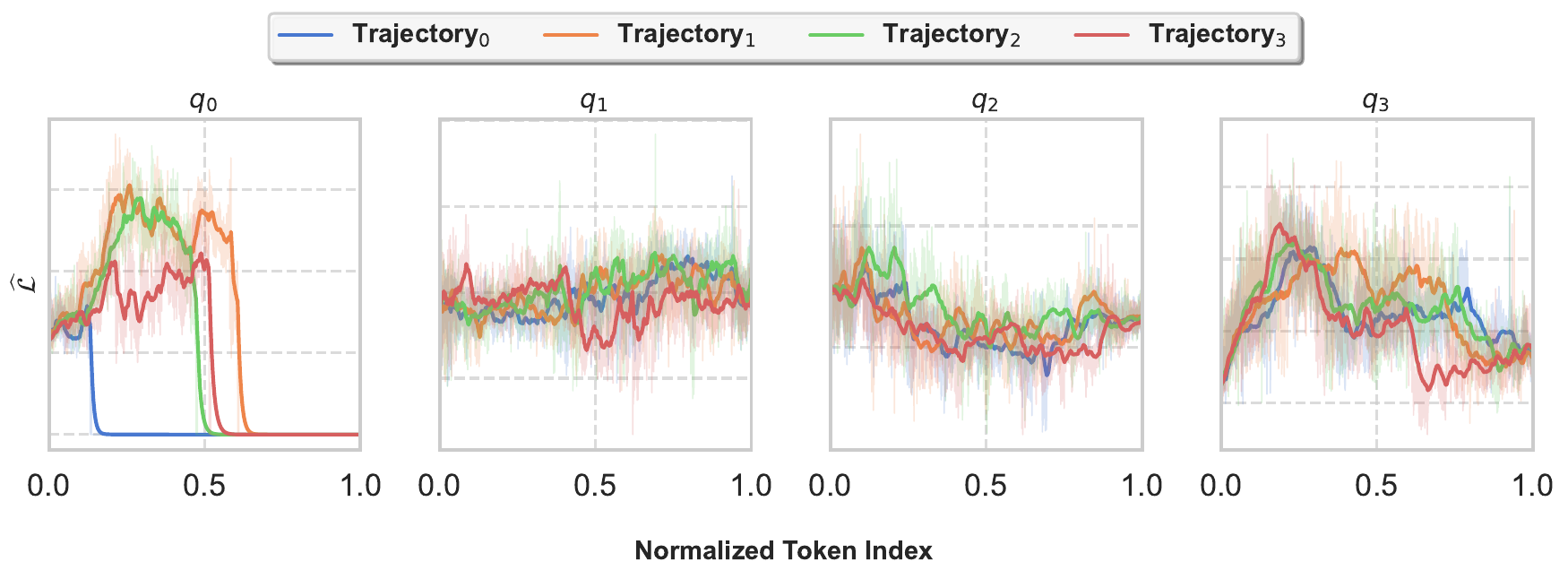}
    \end{subfigure}
    \caption{More visualizations of the \textit{pseudo-gradient update} of QwQ.} \label{fig:qwq-gradient-decent-appendix}
\end{figure}

\begin{figure} \ContinuedFloat*
    \centering
    \begin{subfigure}{\textwidth}
        \centering
        \includegraphics[width=.9\textwidth]{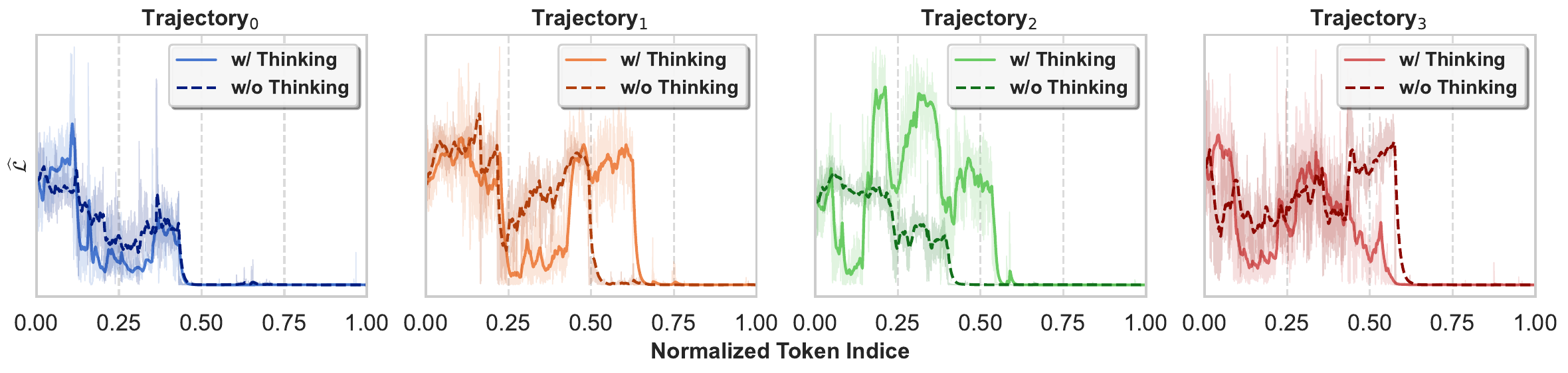}
    \end{subfigure}
    \begin{subfigure}{\textwidth}
        \centering
        \includegraphics[width=.9\textwidth]{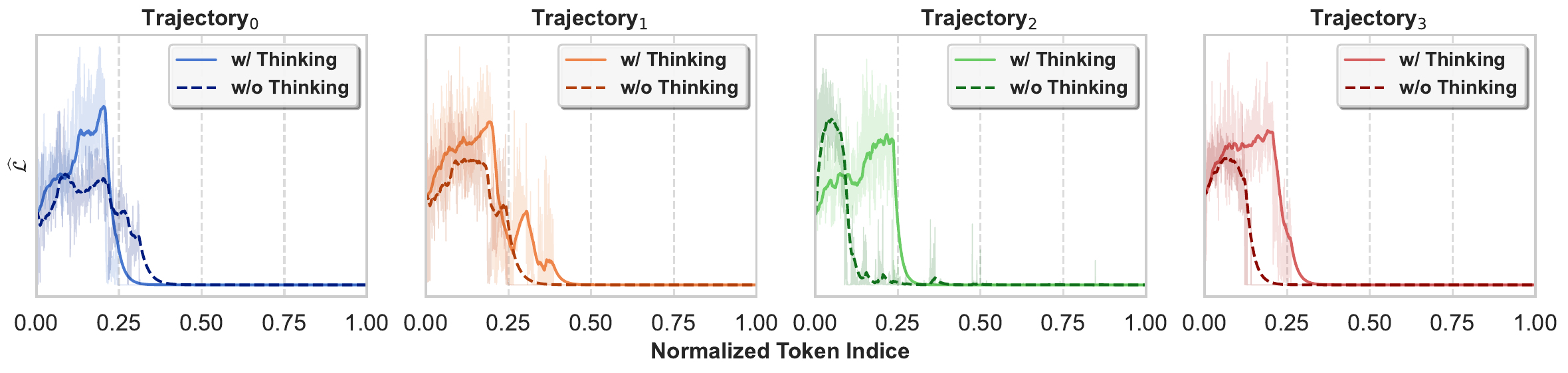}
    \end{subfigure}
    \begin{subfigure}{\textwidth}
        \centering
        \includegraphics[width=.9\textwidth]{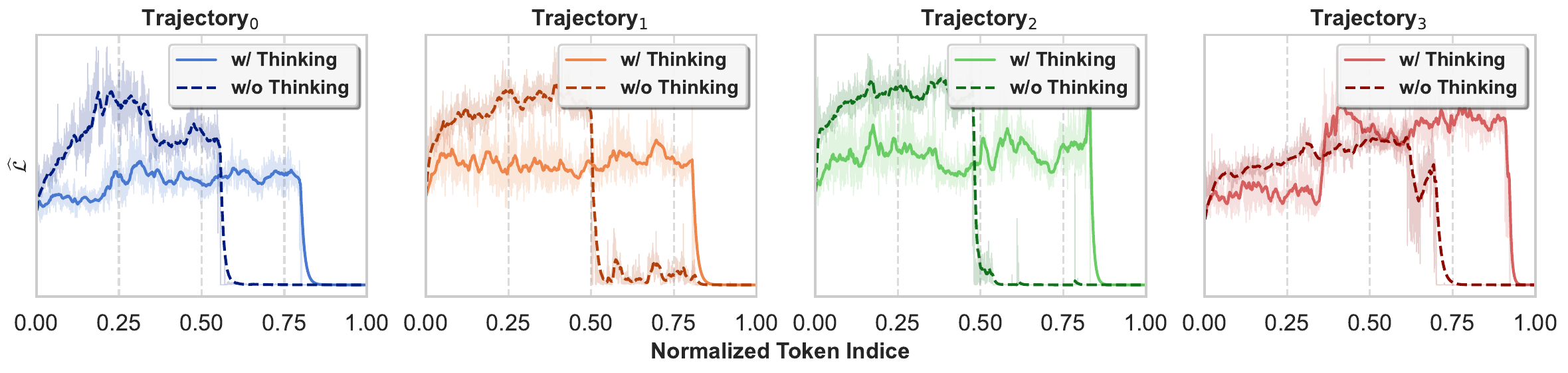}
    \end{subfigure}
    \begin{subfigure}{\textwidth}
        \centering
        \includegraphics[width=.9\textwidth]{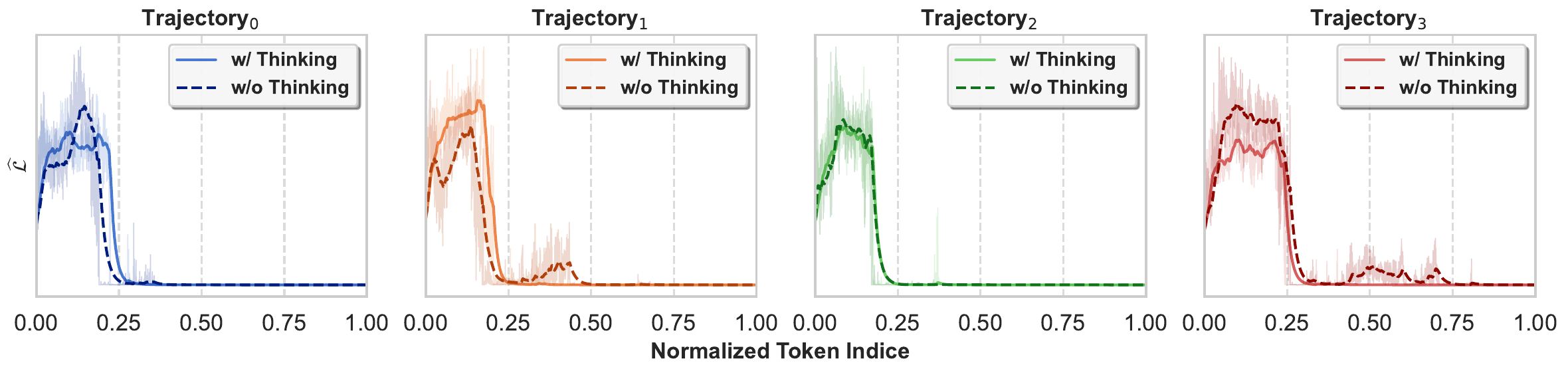}
    \end{subfigure}
    \caption{More visualizations of QwQ's \textit{pseudo-gradient update} for both thinking and no-thinking modes.} \label{fig:qwq-nothinking-gradient-decent-appendix}
\end{figure}

\begin{figure} \ContinuedFloat
    \centering
    \begin{subfigure}{\textwidth}
        \centering
        \includegraphics[width=.9\textwidth]{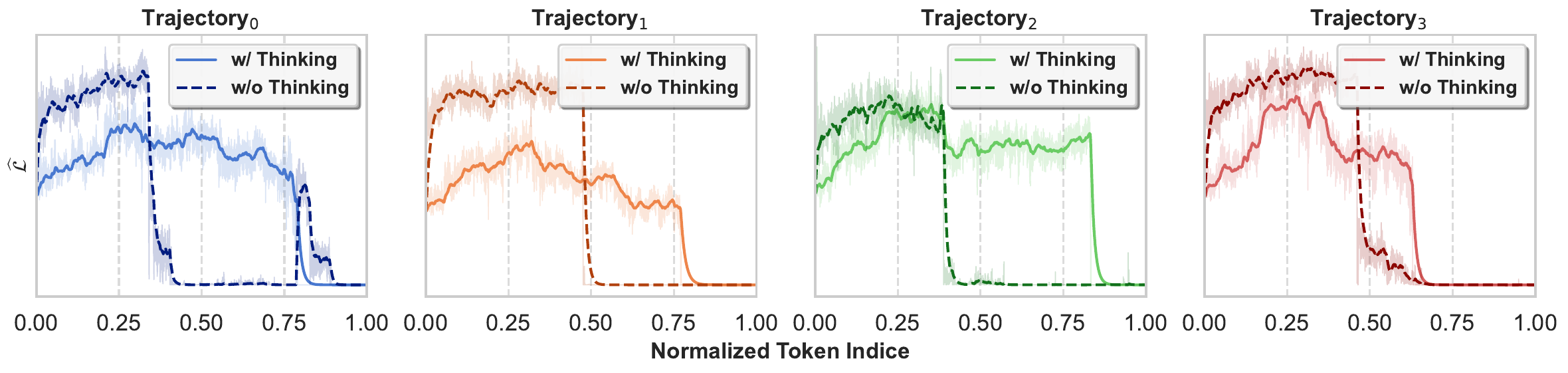}
    \end{subfigure}
    \begin{subfigure}{\textwidth}
        \centering
        \includegraphics[width=.9\textwidth]{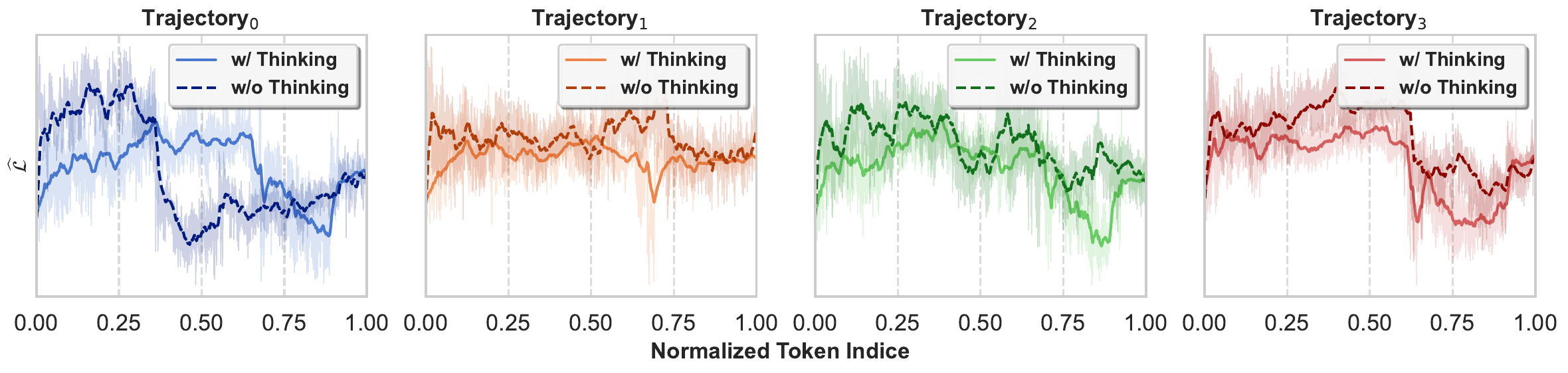}
    \end{subfigure}
    \begin{subfigure}{\textwidth}
        \centering
        \includegraphics[width=.9\textwidth]{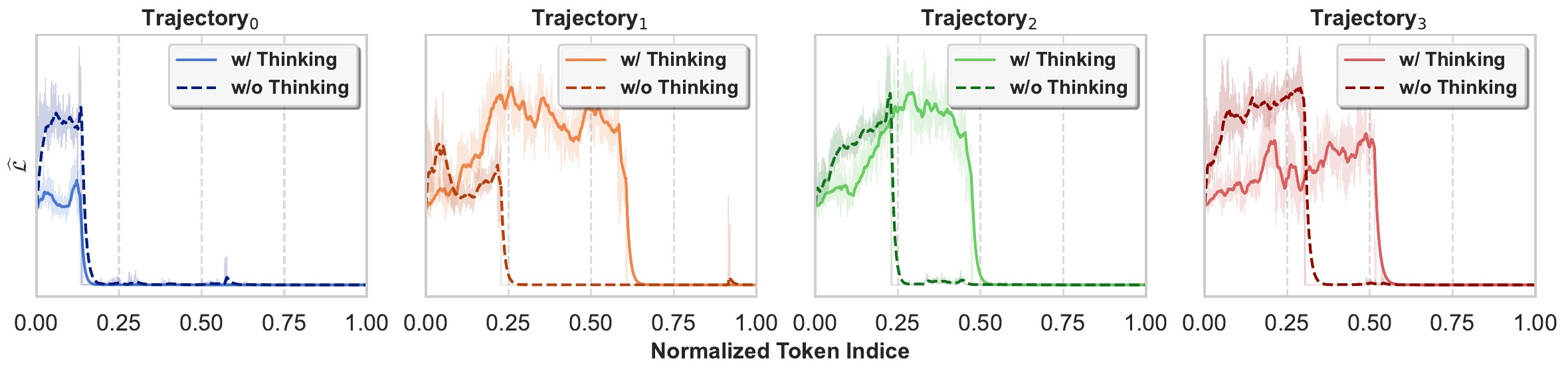}
    \end{subfigure}
    \begin{subfigure}{\textwidth}
        \centering
        \includegraphics[width=.9\textwidth]{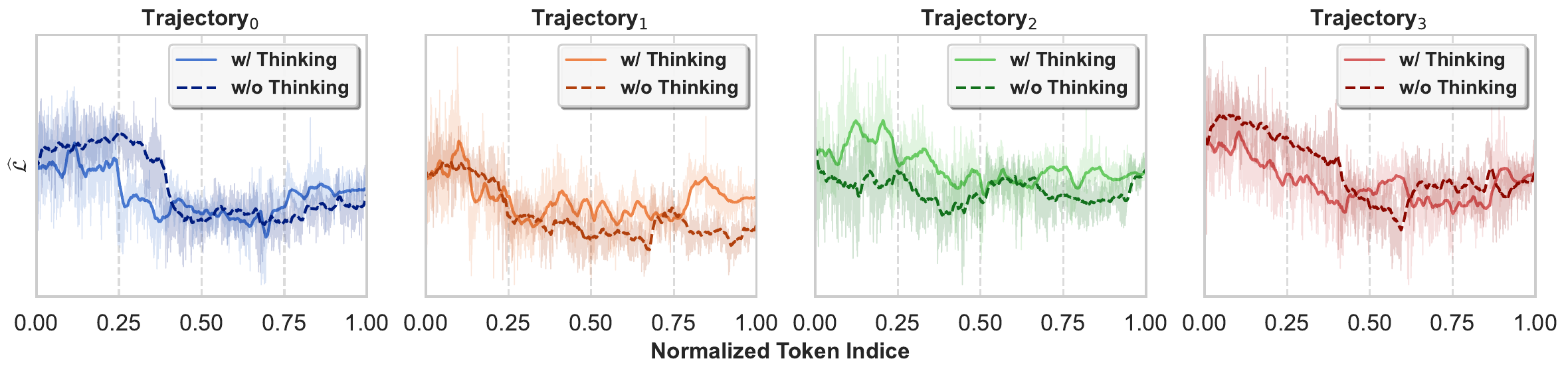}
    \end{subfigure}
    \caption{More visualizations of QwQ's \textit{pseudo-gradient update} for both thinking and no-thinking modes.}
\end{figure}


\begin{figure} \ContinuedFloat*
    \centering
    \begin{subfigure}{\textwidth}
        \centering
        \includegraphics[width=.9\textwidth]{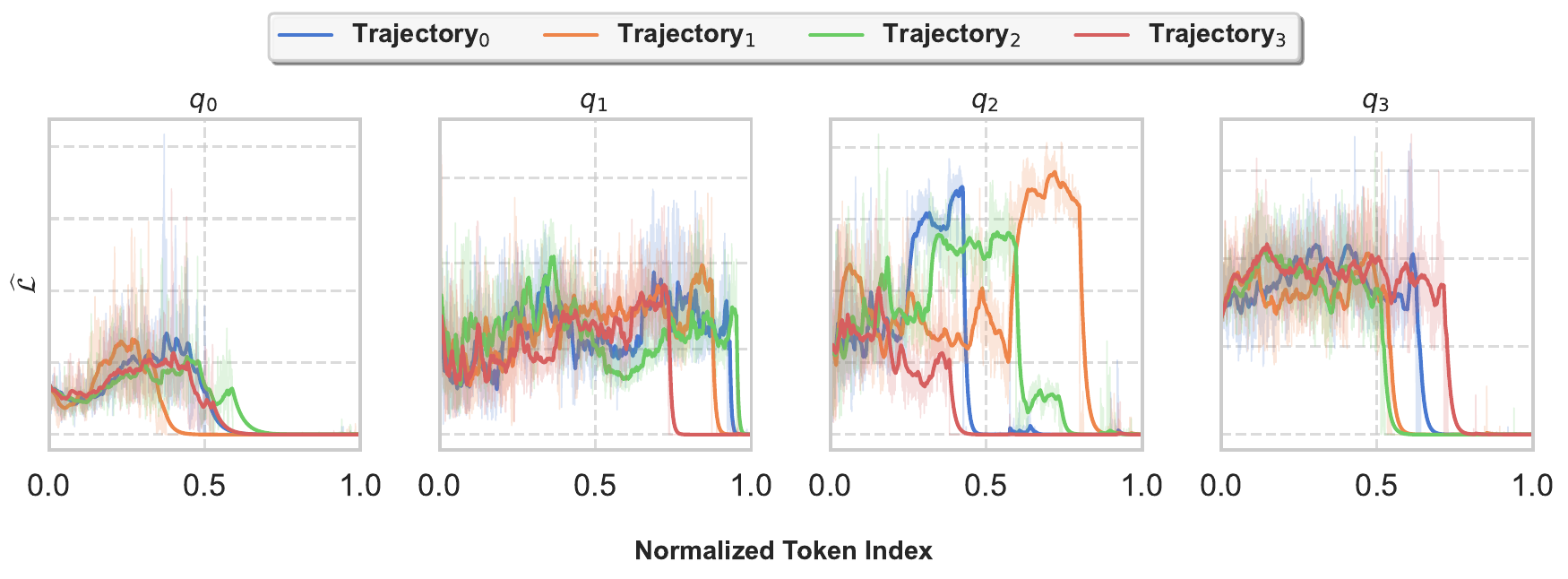}
    \end{subfigure}
    \begin{subfigure}{\textwidth}
        \centering
        \includegraphics[width=.9\textwidth]{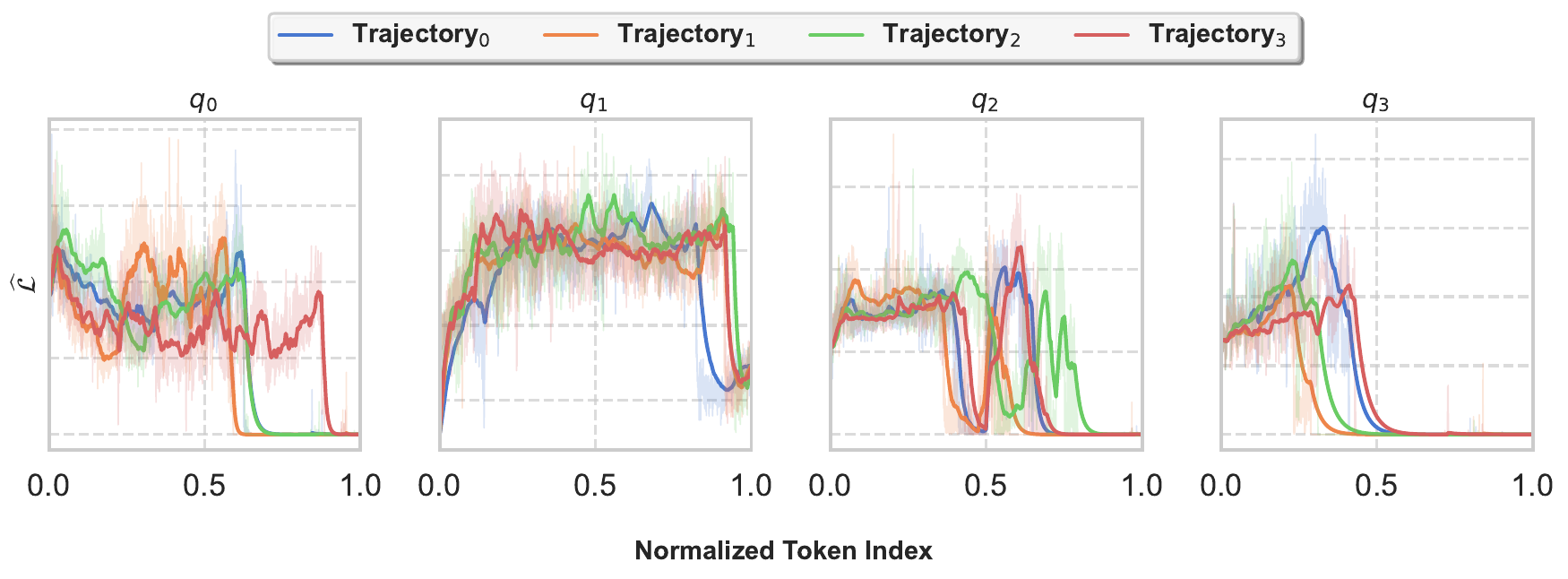}
    \end{subfigure}
    \begin{subfigure}{\textwidth}
        \centering
        \includegraphics[width=.9\textwidth]{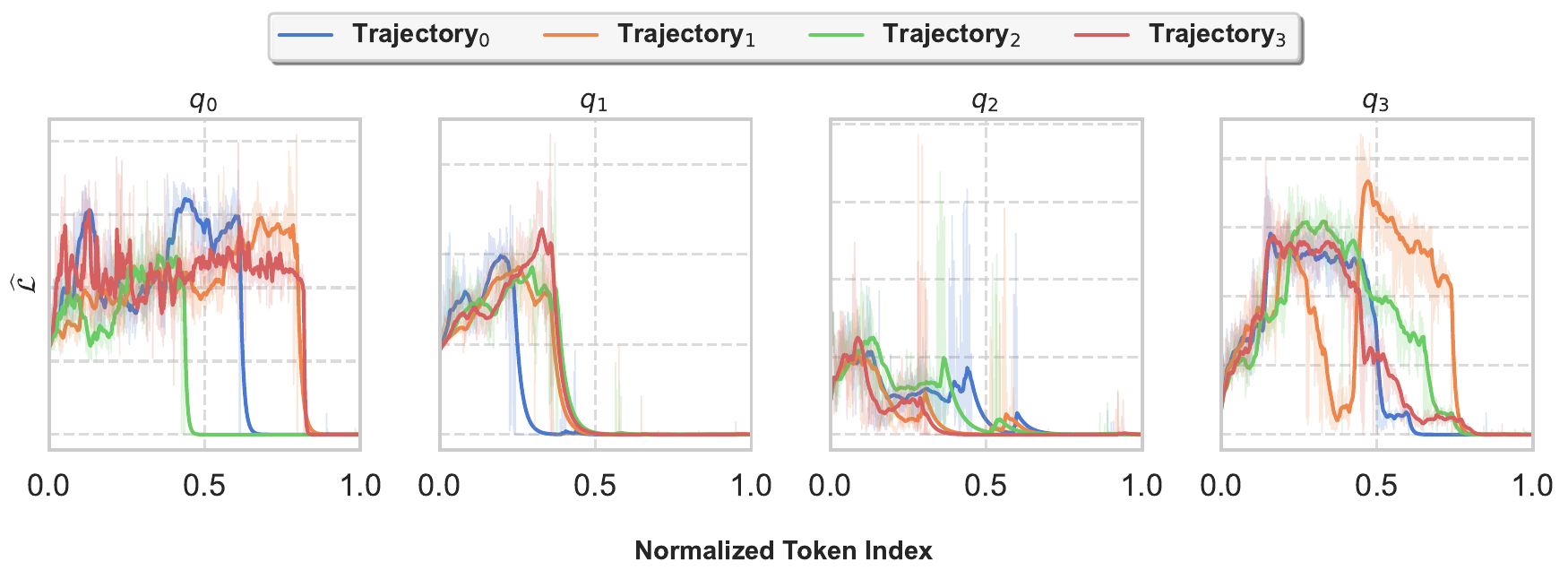}
    \end{subfigure}
    \begin{subfigure}{\textwidth}
        \centering
        \includegraphics[width=.9\textwidth]{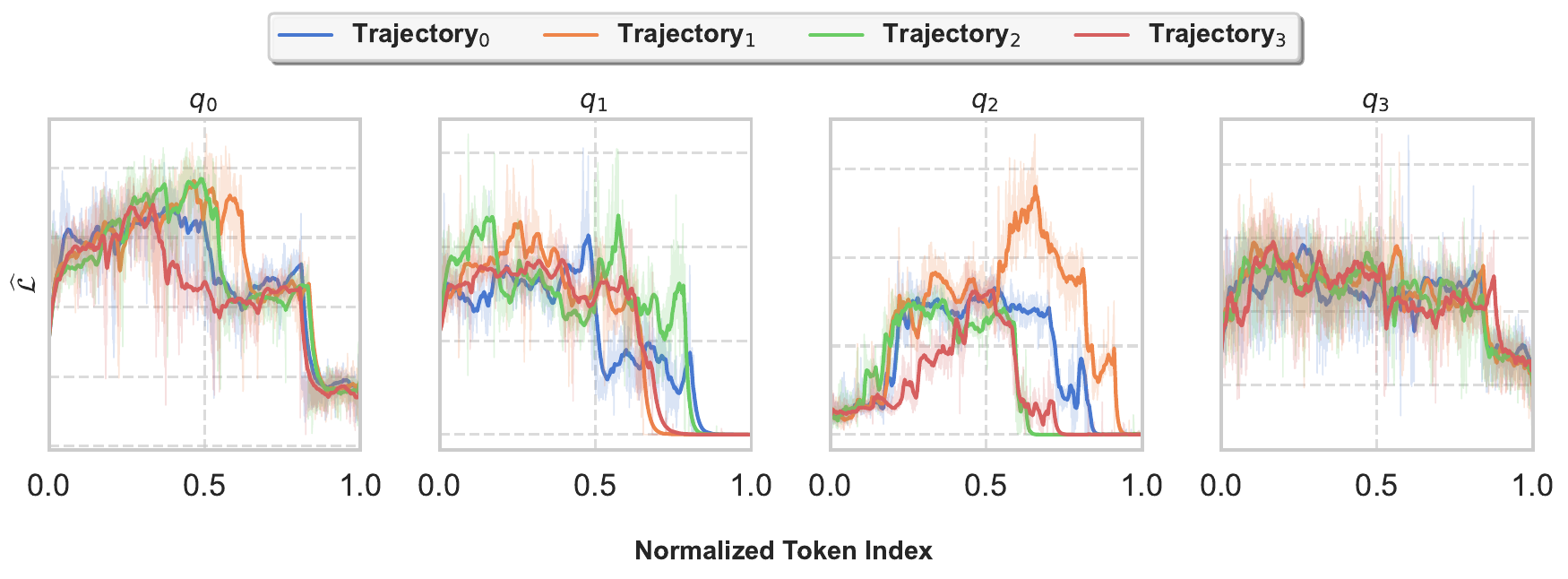}
    \end{subfigure}
    \caption{Visualizations of the \textit{pseudo-gradient update} of Qwen3.}
\end{figure}

\begin{figure} \ContinuedFloat
    \begin{subfigure}{\textwidth}
        \centering
        \includegraphics[width=.9\textwidth]{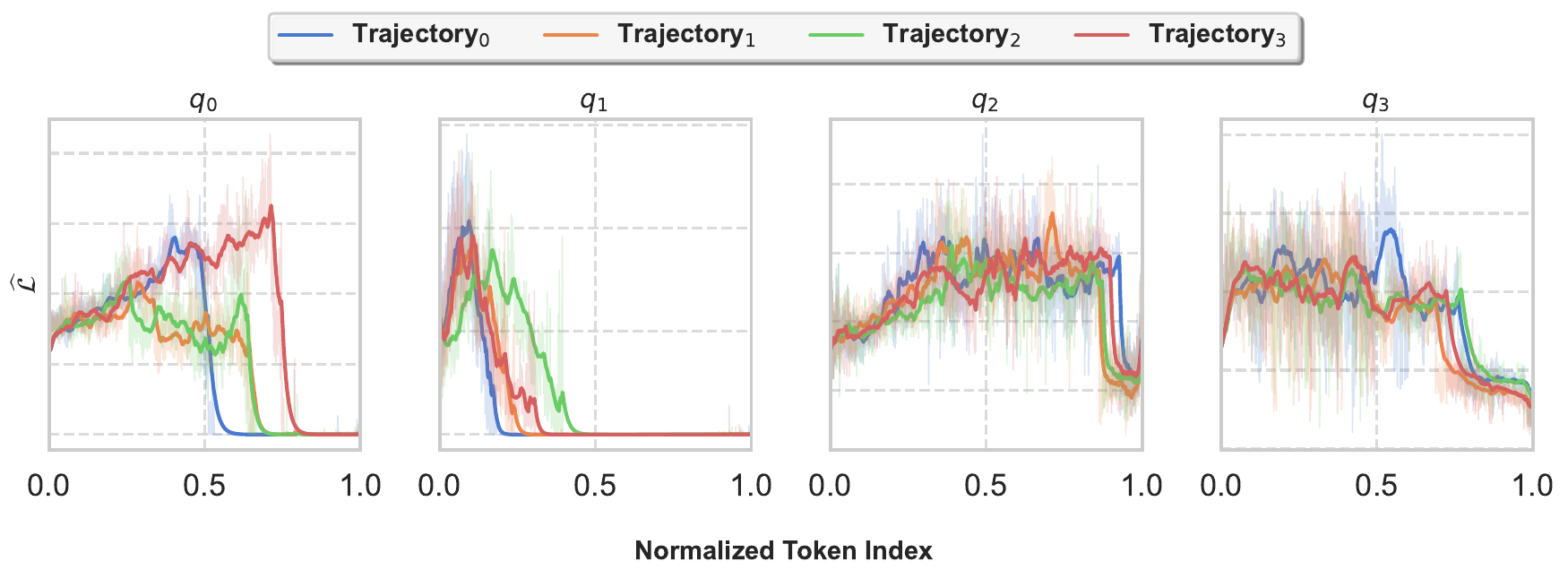}
    \end{subfigure}
    \caption{Visualizations of the \textit{pseudo-gradient update} of Qwen3.} \label{fig:qwen_gradient_decent}
\end{figure}


\begin{figure}
    \centering
    \begin{subfigure}{\textwidth}
        \centering
        \includegraphics[width=.9\textwidth]{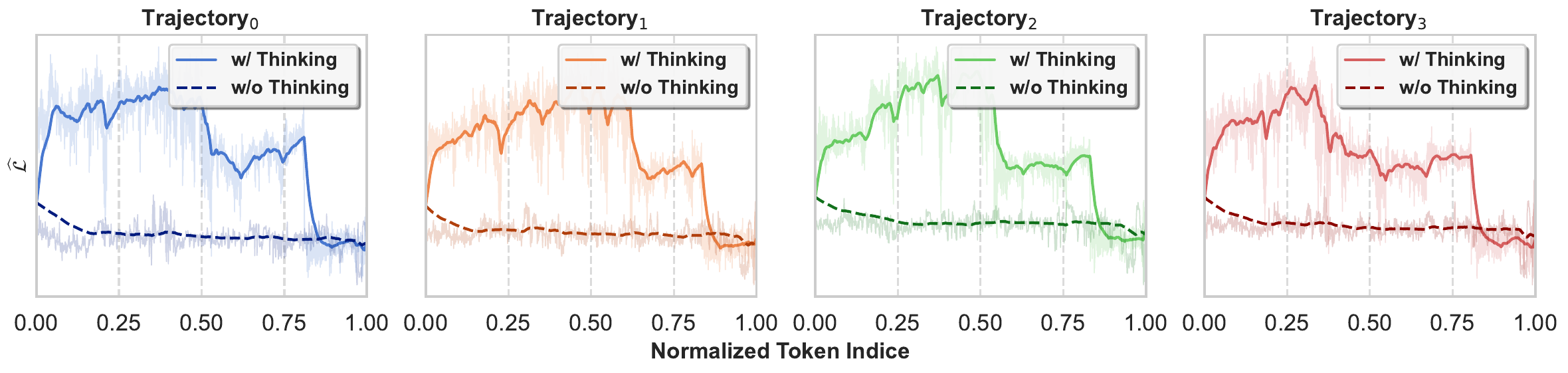}
    \end{subfigure}
    \begin{subfigure}{\textwidth}
        \centering
        \includegraphics[width=.9\textwidth]{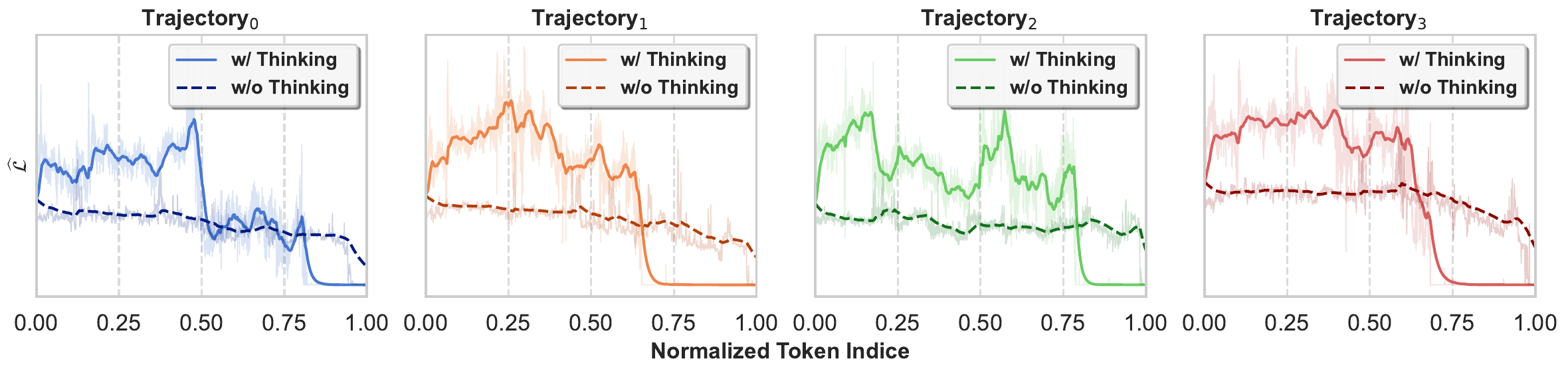}
    \end{subfigure}
    \begin{subfigure}{\textwidth}
        \centering
        \includegraphics[width=.9\textwidth]{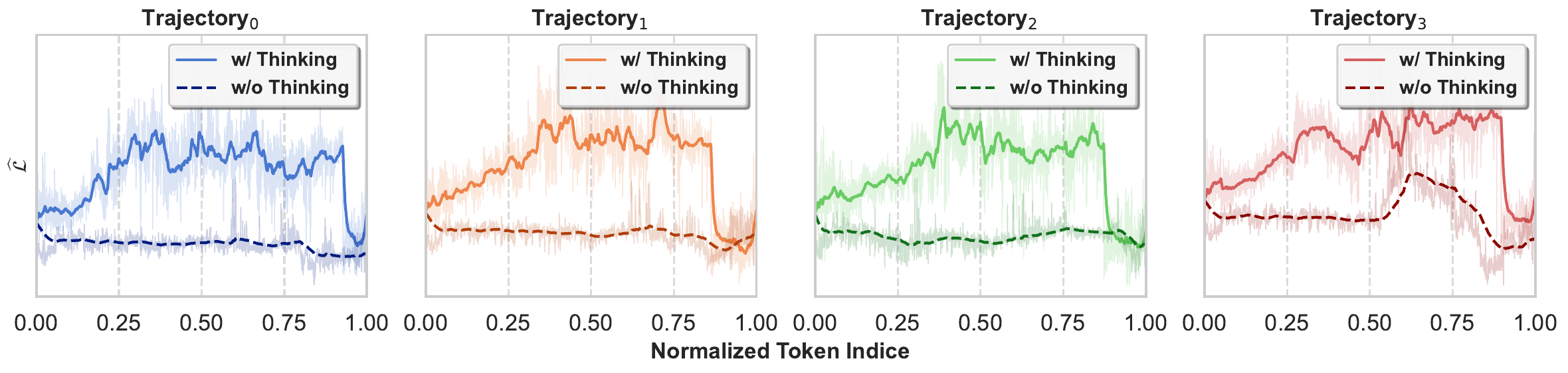}
    \end{subfigure}
    \begin{subfigure}{\textwidth}
        \centering
        \includegraphics[width=.9\textwidth]{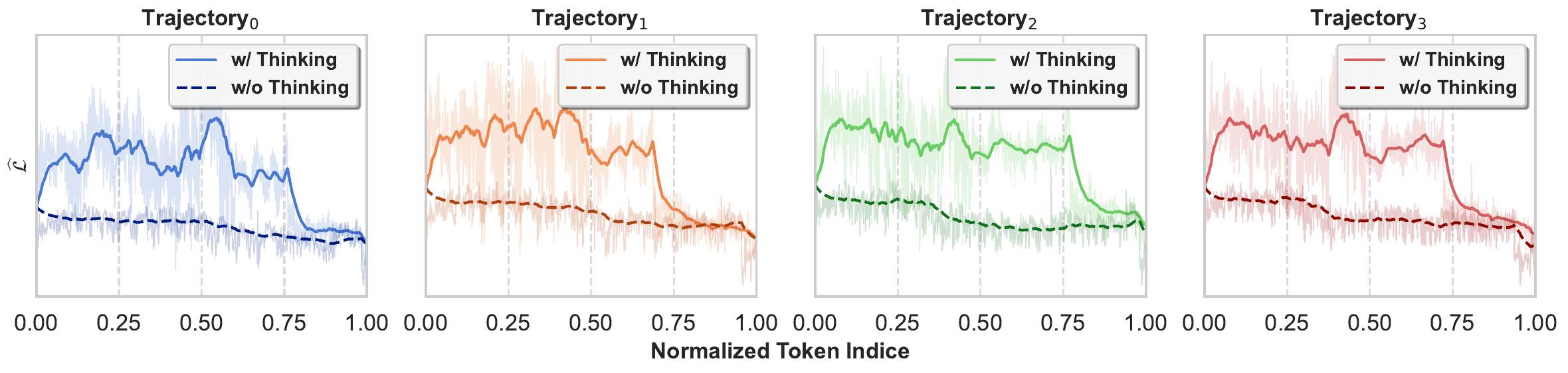}
    \end{subfigure}
    \begin{subfigure}{\textwidth}
        \centering
        \includegraphics[width=.9\textwidth]{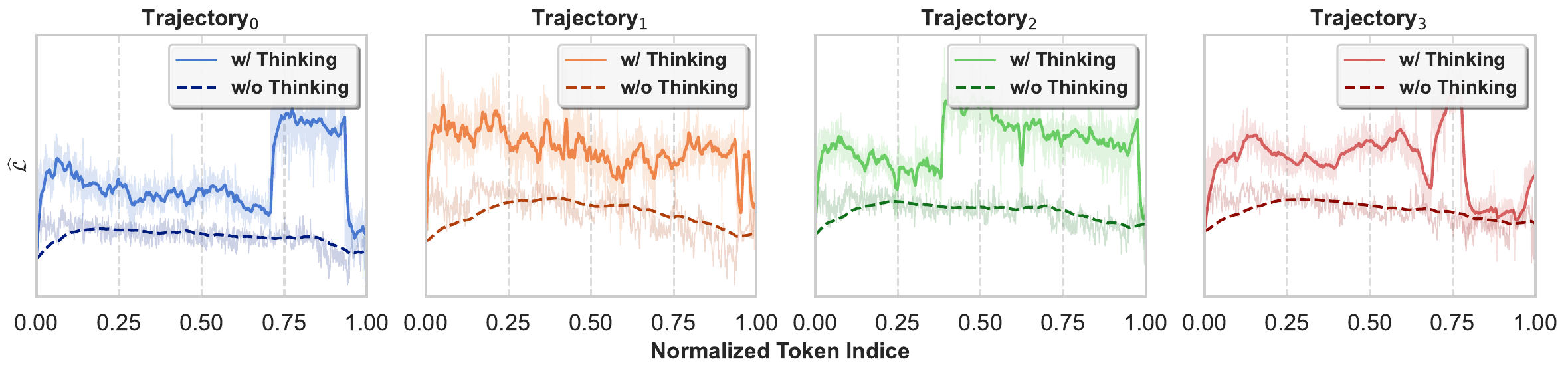}
    \end{subfigure}
    \caption{Visualizations of Qwen3's \textit{pseudo-gradient update} for both thinking and no-thinking modes.} \label{fig:qwen_nothinking_gradient_decent}
\end{figure}


\begin{figure} \ContinuedFloat*
    \centering
    \begin{subfigure}{\textwidth}
        \centering
        \includegraphics[width=.9\textwidth]{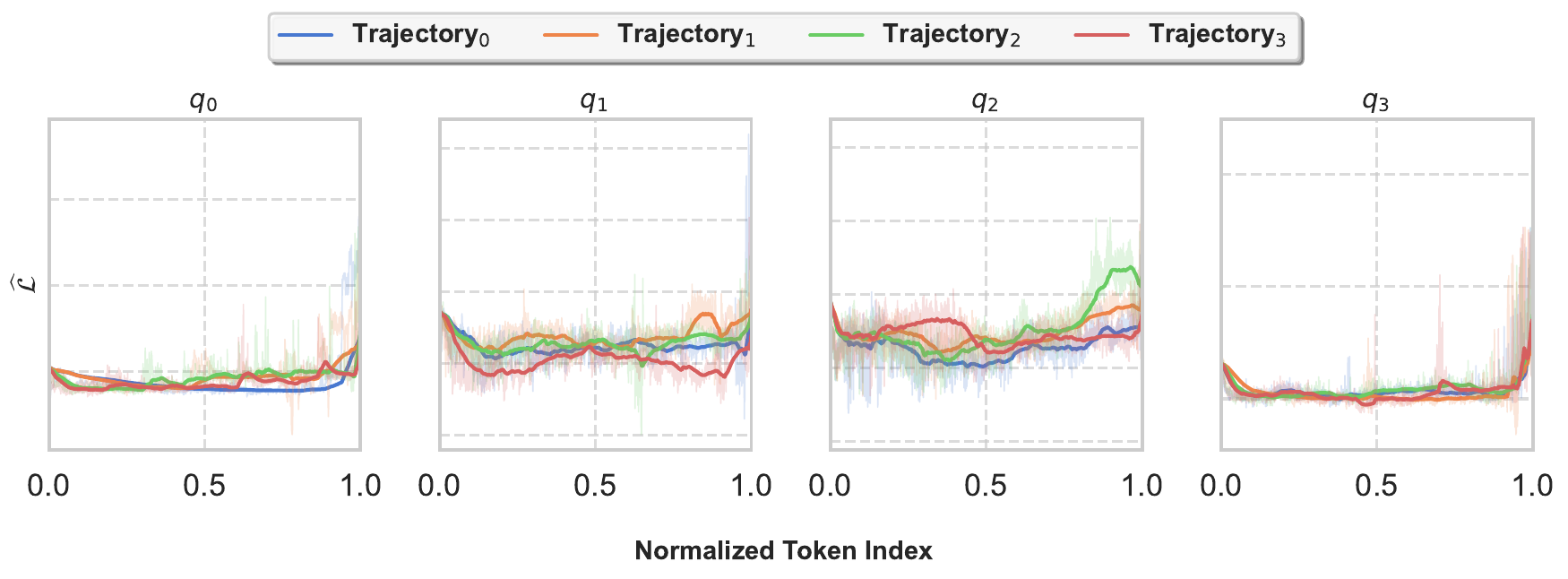}
    \end{subfigure}
    \begin{subfigure}{\textwidth}
        \centering
        \includegraphics[width=.9\textwidth]{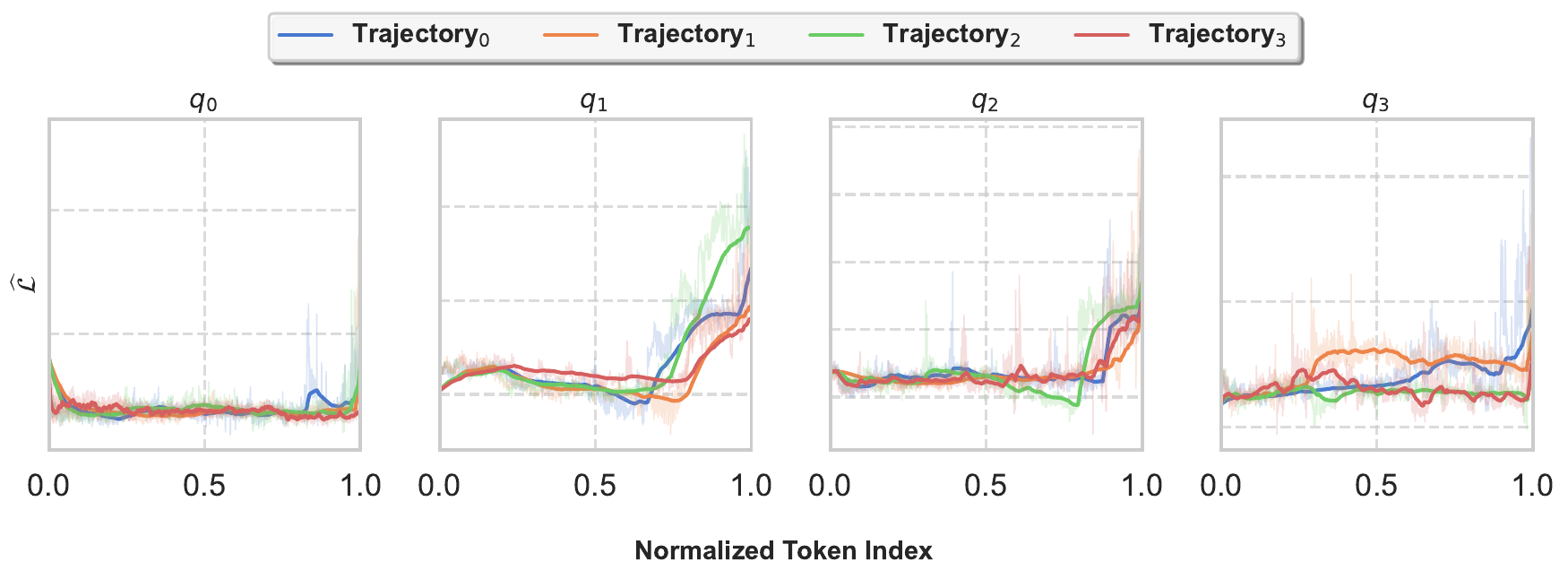}
    \end{subfigure}
    \begin{subfigure}{\textwidth}
        \centering
        \includegraphics[width=.9\textwidth]{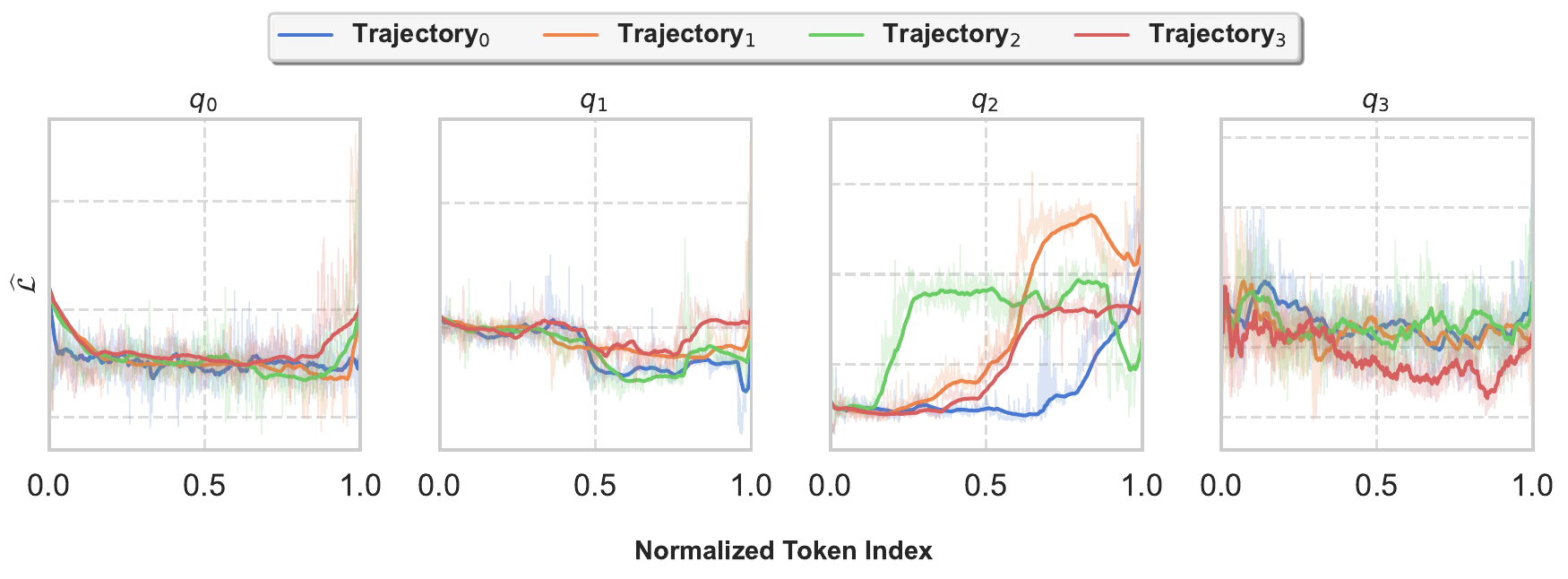}
    \end{subfigure}
    \begin{subfigure}{\textwidth}
        \centering
        \includegraphics[width=.9\textwidth]{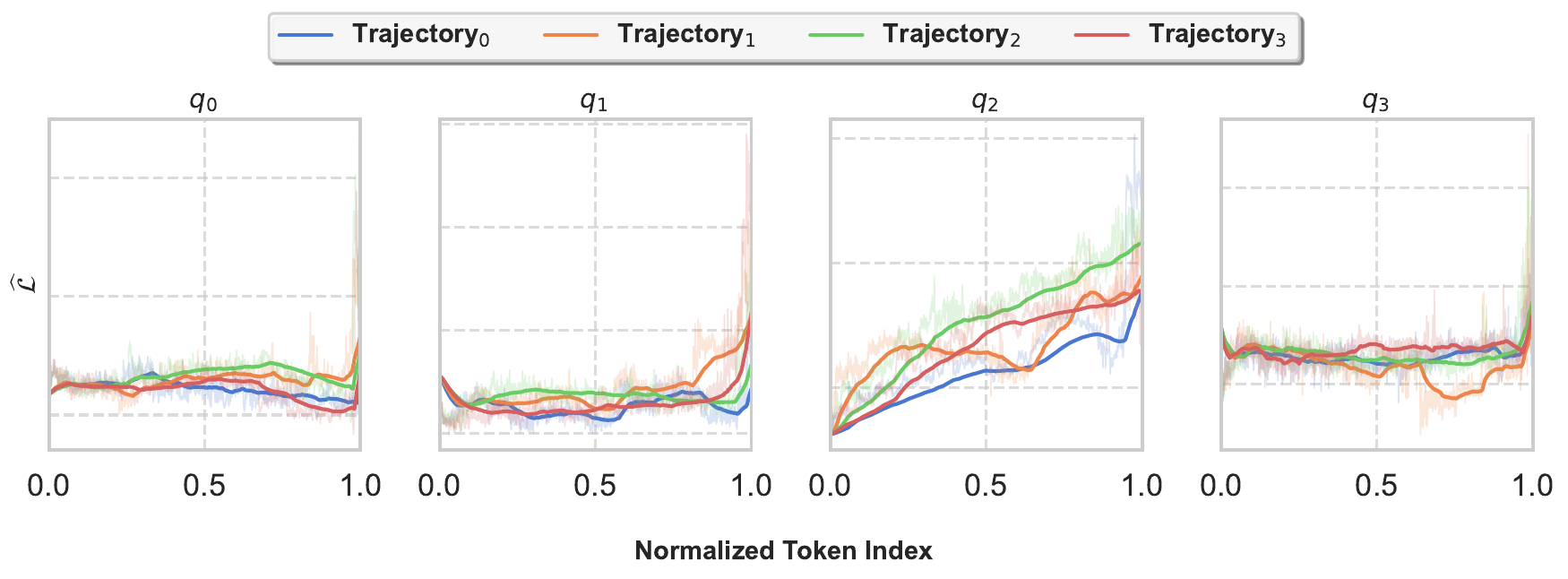}
    \end{subfigure}
    \caption{Illustration Qwen3's \textit{pseudo-gradient update} corresponding to false reasoning trajectories.}
\end{figure}

\begin{figure} \ContinuedFloat
    \begin{subfigure}{\textwidth}
        \centering
        \includegraphics[width=.9\textwidth]{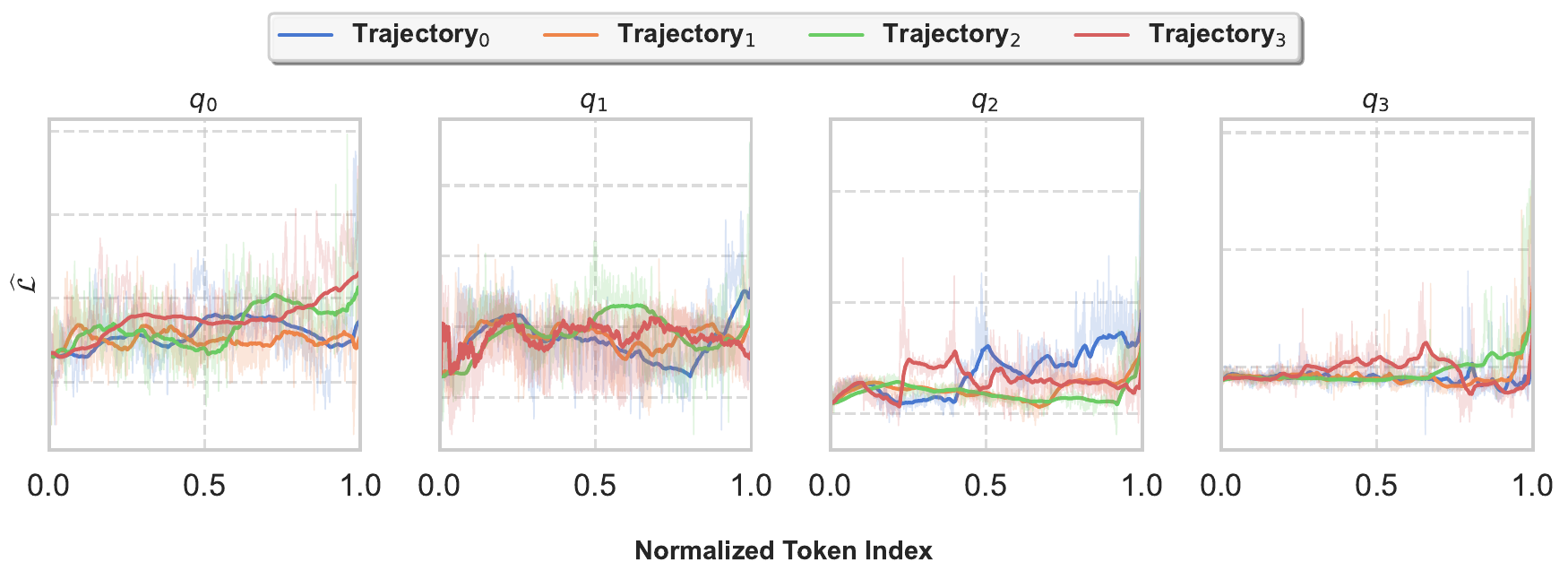}
    \end{subfigure}
    \caption{Illustration Qwen3's \textit{pseudo-gradient update} corresponding to false reasoning trajectories.} \label{fig:qwen_nothinking_false_gradient_decent}
\end{figure}



\end{document}